\def\BibTeX{{\rm B\kern-.05em{\sc i\kern-.025em b}\kern-.08em
    T\kern-.1667em\lower.7ex\hbox{E}\kern-.125emX}}
\newtheorem{assumption}{Assumption}
\newtheorem{theorem}{Theorem}
\newtheorem{corollary}{Corollary}
\newcommand{\Identity}{{\rm I\kern-.2em l}}
\newcommand{\Expectbracket}[1]{\mathbb{E}\left[ #1 \right]}
\newcommand{\Expectbracketsizable}[2]{\mathbb{E}#1[ #2 #1]}
\newcommand{\Expectsubbracket}[2]{\mathbb{E}_{#1}\left[ #2 \right]}
\newcommand{\Expectsubbracketsizable}[3]{\mathbb{E}_{#1}#2[ #3 #2]}
\newcommand{\Expectcond}[2]{\mathbb{E}\left[\left. #1 \right| #2 \right]}
\newcommand{\bx}{\mathbf{x}}
\newcommand{\by}{\mathbf{y}}
\newcommand{\bz}{\mathbf{z}}
\newcommand{\bg}{\mathbf{g}}
\newcommand{\ba}{\mathbf{a}}
\newcommand{\bb}{\mathbf{b}}
\newcommand{\be}{\mathbf{e}}
\newcommand{\br}{\mathbf{r}}
\newcommand{\bu}{\mathbf{u}}
\newcommand{\bv}{\mathbf{v}}
\newcommand{\norm}[1]{\left\Vert #1 \right\Vert}
\newcommand{\normsq}[1]{\left\Vert #1 \right\Vert^2}
\newcommand{\normsqsizable}[2]{#1\Vert #2 #1\Vert^2}
\newcommand{\innerprod}[1]{\left\langle #1 \right\rangle}
\newcommand{\initqueue}{W}
\begin{document}

\title{Federated Learning with Flexible Control \vspace{-0.05in}}

\author{
\IEEEauthorblockN{Shiqiang Wang\IEEEauthorrefmark{1}, Jake Perazzone\IEEEauthorrefmark{2}, Mingyue Ji\IEEEauthorrefmark{3}, Kevin S. Chan\IEEEauthorrefmark{2}}
\IEEEauthorblockA{
\IEEEauthorrefmark{1}IBM T. J. Watson Research Center, Yorktown Heights, NY, USA. Email: wangshiq@us.ibm.com\\
\IEEEauthorrefmark{2}Army Research Laboratory, Adelphi, MD, USA. Email: \{jake.b.perazzone.civ; kevin.s.chan.civ\}@army.mil, \\
\IEEEauthorrefmark{3}Department of ECE, University of Utah, Salt Lake City, UT, USA. Email: mingyue.ji@utah.edu
}
\thanks{This research was partly sponsored by the U.S. Army Research Laboratory under Agreement Number W911NF-16-3-0002 and the National Science Foundation (NSF) CAREER Award 2145835. The views and conclusions contained in this document are those of the authors and should not be interpreted as representing the official policies, either expressed or implied, of the U.S. Army Research Laboratory or the U.S. Government. The U.S. Government is authorized to reproduce and distribute reprints for Government purposes notwithstanding any copyright notation hereon.}
\vspace{-0.25in}
}

\maketitle

\begin{abstract}
Federated learning (FL) enables distributed model training from local data collected by users. In distributed systems with constrained resources and potentially high dynamics, e.g., mobile edge networks, the efficiency of FL is an important problem. Existing works have separately considered different configurations to make FL more efficient, such as infrequent transmission of model updates, client subsampling, and compression of update vectors. However, an important open problem is how to jointly apply and tune these control knobs in a single FL algorithm, to achieve the best performance by allowing a high degree of freedom in control decisions. In this paper, we address this problem and propose \textit{FlexFL} -- an FL algorithm with multiple options that can be adjusted flexibly. Our FlexFL algorithm allows both arbitrary rates of local computation at clients and arbitrary amounts of communication between clients and the server, making both the computation and communication resource consumption adjustable. We prove a convergence upper bound of this algorithm. Based on this result, we further propose a stochastic optimization formulation and algorithm to determine the control decisions that (approximately) minimize the convergence bound, while conforming to constraints related to resource consumption. The advantage of our approach is also verified using experiments.
\end{abstract}

\begin{IEEEkeywords}
Compressed model update, federated learning, partial participation, stochastic optimization
\end{IEEEkeywords}

\section{Introduction}
Many emerging applications nowadays are driven by machine learning technologies. To train models that are used in such applications, a large training dataset is usually needed. However, it has become increasingly common that data are collected and stored by local users at their end devices or organizational servers. It is difficult to share such data with a central entity, due to privacy regulations and communication bandwidth limitation. As a result, \textit{federated learning (FL)} has emerged as a promising technique for distributed model training from decentralized local datasets~\cite{kairouz2021advances,li2020federated,yang2019federated}.

At its core, FL includes model updates at each client (e.g., user device) using its own local data and aggregation of model parameters through a server (e.g., a cloud instance). In a resource-constrained system, such as a mobile edge network, these FL operations consume both computation and communication resources. Therefore, an important research direction is \textit{how to make the most efficient use of the limited resources to maximize the performance of FL}. Some recent works have considered this problem by tuning configuration parameters of the FL algorithm, such as the number of local updates in each FL round, participation rate of clients, and compression rate of parameters transmitted between clients and the server \cite{hsieh2017gaia,WangJSAC2019,MLSYS2019Jianyu,han2020adaptive,li2020ggs,li2021talk,abdelmoniem2021dc2,xu2021grace,cui2022optimal,nishio2018client,CMFL,wang2020optimizing,shi2020joint,perazzone2022communication,luo2021cost,luo2022tackling,wu2022node}. However, the vast majority of them only focus on adjusting a small subset of all the available control options in FL, which cannot achieve the full potential of making FL the most efficient. In particular, the automatic adaptation of both parameter compression (e.g., sparsification and quantization) and partial client participation has not been studied, to the best of our knowledge. There is also usually a tight coupling between computation and communication in existing works, which may be difficult to achieve in heterogeneous systems where the costs of different resources can vary over time.

In light of these limitations, there is an important open problem: \textit{Is it possible to jointly apply a wide range of control options in a single FL algorithm, to support heterogeneous and time-varying costs\footnote{We consider the resource \textit{cost} as a generic metric in this paper, which can be defined as related to the availability of each type of resource.} of multiple types of resources?} 
There are several challenges in answering this question. \textit{1)} It is non-straightforward to design an FL algorithm that allows simultaneous adjustment of multiple configurations with a high degree of freedom. \textit{2)} It is difficult to analyze and understand the influence of different control options and the interplay between them on the FL performance. \textit{3)} It is challenging to design an efficient control algorithm to automatically determine the best configurations subject to various constraints.

In this paper, we address this problem by proposing \textit{FlexFL}, which is an FL algorithm that includes flexible control knobs that can be adjusted based on computation and communication costs. In essence, FlexFL includes three components: \textit{1)} partial computation at clients, \textit{2)} compressed parameter transmission from each client to the server, and \textit{3)} compressed parameter transmission from the server to clients. Each of these components includes its own controllable parameter to define the rate of computation (for the first component) or communication (for the second and third components). 

There are several key characteristics in FlexFL. First, the amount of computation and the amount of communication are \textit{decoupled} and can be controlled separately, allowing a high degree of freedom in control decisions to suit the current costs of different types of resources. Second, both the computation and communication rates can vary over time and they can be different for different clients and the server, which allows a high degree of \textit{system heterogeneity} and flexible resource usage depending on time-varying costs. 
Third, FlexFL includes the special case of \textit{multiple local computations}\footnote{In FlexFL, clients may perform multiple local computations (updates) with different mini-batches on the same local model parameter, and do not immediately update the parameter. This is slightly different from local updates done in the FL literature, but conceptually both approaches share similarities.} by setting the communication rate to zero in certain rounds.
Moreover, FlexFL and its analysis also allow \textit{statistical heterogeneity}, i.e., non-i.i.d. data across clients, which is commonly observed in practical FL scenarios.

We also present a convergence analysis of our FlexFL algorithm for general non-convex objectives. The resulting convergence bound provides important insights. In particular, we reveal that the convergence error increases in the residual error and decreases in the participation (computation) rate, where the residual error captures the gap between the transmitted model parameter and the computed local parameter (at each client) or received aggregated parameter (at the server). 

Finally, we formulate our control problem as \textit{stochastic optimization} over a finite time horizon, which makes decisions on the computation and communication rates over time, to minimize the convergence error subject to time-averaged cost constraints. We propose a \textit{distributed} and \textit{online} algorithm 
to approximately solve this problem.
In addition, we conduct a thorough analysis of this control algorithm and discuss its important properties and insights, based on which we explain how to balance constraint satisfaction and optimality, and also give closed-form solutions for a class of costs. 

In summary, our main contributions are as follows.
\begin{enumerate}
    \item We present an algorithm named FlexFL, which allows flexible configurations in the amount of computation at each client and the amount of communication between clients and the server. This algorithm provides a high degree of freedom in adapting the FL procedure to heterogeneous and dynamically changing resource costs.
    \item We analyze the convergence error bound of FlexFL, which reveals important insights on how the residual error and participation rate affect the convergence. This result lays out the foundation for our control algorithm.
    \item We propose a control algorithm that is derived from stochastic optimization, to approximately minimize the convergence error while satisfying constraints on the time-averaged resource cost. Our control algorithm makes decisions in an online and distributed manner, without requiring prior knowledge of system statistics.
    \item We give an in-depth analysis of our control algorithm, revealing several insights including how to adjust the trade-off between constraint satisfaction and optimality.
    \item We present experimental results on real datasets, which confirm the advantage of our proposed approach.
\end{enumerate}

\section{Related Works}

Over the past few years, efforts have been made to make FL resource-efficient, using techniques such as computing multiple local updates between communication rounds~\cite{mcmahan2017communication,yu2019parallel,Gorbunov2021Local,Haddadpour2019Local,Lin2020Dont,stich2018local}, transmitting compressed (sparse or quantized) model updates~\cite{basu2019qsparse,haddadpour2021federated,alistarh2017qsgd,bernstein2018signsgd,shlezinger2020uveqfed,reisizadeh2020fedpaq,Gradient-Sparsification,Sattler2019,albasyoni2020optimal,gorbunov2021marina,Stich2018Sparsified,The-Convergence-of-Sparsified-Gradient-Methods,stich2020error,karimireddy2019error,tang2019doublesqueeze}, and allowing only a small subset of clients to participate in each FL round~\cite{fraboni2021clustered,yang2021achieving,cho2022towards,Li2020On}. A large body of these works focuses on analyzing the convergence behavior of these algorithms, but the study of multiple local computations with partial client participation has been largely separate from compression. To our knowledge, there does not exist work that incorporates both partial client participation and the special case of no transmission (similar to multiple local computations) with general (possibly biased) compressors. 

In addition, the above works consider fixed FL configuration parameters related to communication and computation, which can be difficult to tune. To address this problem, some recent works have considered the automatic determination of communication interval~\cite{hsieh2017gaia,WangJSAC2019,MLSYS2019Jianyu}, rate of compression~\cite{li2020ggs,han2020adaptive,li2021talk,abdelmoniem2021dc2,xu2021grace,cui2022optimal}, client selection~\cite{nishio2018client,CMFL,shi2020joint,perazzone2022communication,wu2022node,wang2020optimizing,luo2021cost,luo2022tackling}, and other aspects~\cite{zhou2020cefl},  to accommodate the dynamic availability of resources. However, many of these works require a sophisticated process of estimating parameters related to the convergence bound, while some others are mostly heuristic without convergence guarantee. Moreover, none of these works consider the joint design of partial client participation and compression at both the server and clients.

\section{Federated Learning and FlexFL}

\subsection{Federated Learning Objective}

We consider an FL system with $N$ clients, where each client~$n$ has a local loss function $F_n(\bx)$ for model parameter $\bx \in \mathbb{R}^d$. The function $F_n(\bx)$ is defined on each client $n$'s local dataset, which represents the error (or loss) between the predicted output given by the model (with parameter vector $\bx$) and the ground-truth output in the training dataset. The goal of FL is to minimize the global loss function $f(\bx)$, as in:
\begin{equation}
    \textstyle\min_\bx f(\bx) := \frac{1}{N}\sum_{n=1}^N F_n(\bx),
    \label{eq:FLObj}
\end{equation}
where the average can be replaced by a weighted average if desired, but we consider the weighting coefficients to be part of $F_n(\bx)$ for simplicity. A characteristic of FL is that the local loss functions $\{F_n(\bx) : \forall n\}$ are not observed directly, because the clients' raw data are not shared. Therefore, FL needs to solve~\eqref{eq:FLObj} in a distributed manner.

\subsection{FlexFL Algorithm}

We describe our FlexFL algorithm to solve \eqref{eq:FLObj}. Similar to other FL algorithms, intermediate model parameter updates are exchanged between clients and the server, while the raw data remain private at the clients locally. The full algorithm is given in Algorithm~\ref{alg:mainAlg},
where we consider a time-slotted system and the time slots align with the iterations\footnote{We use ``time slot'' and ``iteration'' interchangeably in this paper.} in FL. We explain the main procedure of this algorithm as follows.

The algorithm includes three sets of control parameters denoted by $\{q_t^n\}$, $\{\bv_t^n\}$, and $\{\bu_t\}$. These parameters are taken as inputs by Algorithm~\ref{alg:mainAlg}, and they can be computed by our control algorithm (Algorithm~\ref{alg:controlAlg}) described later in Section~\ref{sec:Control}. We further let $\bx_t$ denote the model parameter at the beginning of each iteration $t$. 
However, we do not transmit $\bx_t$ directly between clients and the server. Instead, we transmit (possibly) compressed vectors of parameter updates, as we will see next. For the purpose of description and analysis, we assume that there are $T$ iterations in total.

\begin{algorithm}[tb]
\footnotesize
\caption{FlexFL}
\label{alg:mainAlg}

\SetKwFor{EachClient}{each client $n \leftarrow 1,\ldots,N$ in parallel:}{}{end}
\SetKwFor{Server}{the server:}{}{end}
\SetKwInput{Constants}{Constants}
\SetKwInput{Decisions}{Control parameters}

\Constants{$\eta > 0$, initial (random) model parameter $\bx_0$} 
\Decisions{$\{q_t^n\}$, $\{\bv_t^n\}$, $\{\bu_t\}$ \tcp{determined by Algorithm~\ref{alg:controlAlg} in Section~\ref{sec:Control}}}
\KwOut{ %
$\{\bx_t\}$
}

$\br_0 \leftarrow \mathbf{0}$; \label{alg-line:initialization1} \, \tcp{server residual error}

$\be_0^n \leftarrow \mathbf{0}, \forall n$;\label{alg-line:initialization2} \, \tcp{client residual error}

\For{$t \leftarrow 0, \ldots, T-1$}{
    \EachClient{\label{alg-line:loopWorkers}}{
        Sample $\!\Identity_t^n \!\sim\! \mathrm{Bernoulli}(q_t^n)$;\, \label{alg-line:randomParticipation} \tcp{randomized compute}

        $\bb_t^n \leftarrow \be_t^n - \frac{\eta \Identity_t^n}{q_t^n}\cdot \bg_n(\bx_t)$;\,
        \tcp{no compute if $\Identity_t^n=0$} \label{alg-line:localAccumulation}

        $\be^n_{t+1} \leftarrow \bb_t^n - \bv_t^n$; \,\tcp{here, $\bv_t^n$ is usually a compression of $\bb_t^n$\!\!\!\!} \label{alg-line:errorFeedbackUpdate}
        
        Send $\bv_t^n$ to the server; \, \label{alg-line:sendToServer} \tcp{transmitted local update}
    }

    \Server{}{
        $\ba_t \leftarrow \br_t + \frac{1}{N}\sum_{n=1}^N \bv_t^n$;\label{alg-line:serverAggregation}

        $\br_{t+1} \leftarrow \ba_t - \bu_t$; \,\tcp{here, $\bu_t$ is usually a compression of $\ba_t$\!\!\!\!} \label{alg-line:serverResidualErrorUpdate}
        
        Send $\bu_t$ to all clients;\, \label{alg-line:sendToClients} \tcp{transmitted global update}
    }

    \EachClient{\label{alg-line:finalSyncStart}}{    
        $\bx_{t+1} \leftarrow \bx_t + \bu_t $; \label{alg-line:globalUpdate} \, \tcp{synchronized model parameter}
    }

}
\end{algorithm}

\subsubsection{Local Computation at Clients} 
In every iteration $t$, each client $n$ computes a new stochastic gradient $\bg_n(\bx_t)$ of the local loss function $F_n(\bx_t)$ with probability $q_t^n \in (0, 1]$ (Line~\ref{alg-line:randomParticipation}). We use the identity $\Identity_t^n \in \{0,1\}$ to denote the random outcome, which is equal to one if client $n$ performs a new computation in this iteration $t$, and zero otherwise. If $\Identity_t^n = 1$, this stochastic gradient $\bg_n(\bx_t)$ is applied in Line~\ref{alg-line:localAccumulation} in the form of stochastic gradient descent (SGD) with a given learning rate of $\eta>0$. We divide the learning rate by $q_t^n$ to keep the update unbiased. If $\Identity_t^n = 0$, the last term in the right-hand side (RHS) of Line~\ref{alg-line:localAccumulation} is zero, and we do not make any update in this case. 
In practice, we \textit{do not compute $\bg_n(\bx_t)$ if $\Identity_t^n = 0$}, which is \textit{equivalent} to the update equation in Line~\ref{alg-line:localAccumulation} since the value of $\bg_n(\bx_t)$ has no effect on the subsequent updates if $\Identity_t^n = 0$. In this way, the probability $q_t^n$ controls the rate of computation, where a larger $q_t^n$ indicates that more computation is done (in expectation), consuming more computation resources, and vice versa.

\subsubsection{Client-to-Server Communication} 
Each client $n$ keeps a \textit{residual error}, which is a vector that contains portions of the \textit{changes} in the model parameter $\bx$ that have not been transmitted from the client yet. The residual error of client $n$ at the beginning of iteration $t$ is denoted by $\be_t^n$. In Line~\ref{alg-line:localAccumulation}, the new SGD update is accumulated on $\be_t^n$, giving a new temporary vector denoted by $\bb_t^n$. Then, in Line~\ref{alg-line:errorFeedbackUpdate}, the (usually sparse or quantized) vector that is transmitted to the server (i.e., $\bv_t^n$) is subtracted from $\bb_t^n$, and the remaining quantity that is not transmitted is kept in $\be_{t+1}^n$, which is the residual error at the beginning of the next iteration $t+1$. The vector $\bv_t^n$ is usually a compression result of $\bb_t^n$. We will describe in Section~\ref{sec:Control} how $\bv_t^n$ is computed, with more specific examples in Section~\ref{subsec:specificCost}.

\subsubsection{Multiple Local Computations and Decoupling}
When $\bv_t^n=\mathbf{0}$, we do not transmit in this iteration, which captures the case of multiple rounds of computation before communication happens.
Noting that $\bv_t^n = \mathbf{0}$ corresponds to no transmission by client $n$ in iteration $t$, we emphasize that we can have $\bv_t^n = \mathbf{0}$ even if $\Identity_t^n = 1$, or $\bv_t^n \neq \mathbf{0}$ even if $\Identity_t^n = 0$. In this way, the computation and communication decisions can be \textit{decoupled}. When some iterations have low computation cost but high communication cost, while other iterations have high computation cost but low communication cost, we may decide to compute in those iterations with low computation cost, and transmit in other iterations with low communication cost.

\subsubsection{Server-to-Client Communication}
After receiving the parameter updates from clients, the server averages $\{\bv_t^n : \forall n\}$ and adds the result to its own residual error $\br_t$ (Line~\ref{alg-line:serverAggregation}). If a client $n$ does not transmit any update, the server considers $\bv_t^n = \mathbf{0}$ for this client $n$ and it is still included in computing the average. Then, similar to the operation at clients, a (usually sparse or quantized) vector $\bu_t$ is transmitted to all the clients and the remaining part is kept in the residual error $\br_{t+1}$ for the next iteration $t+1$ (Line~\ref{alg-line:serverResidualErrorUpdate}). We consider a broadcast channel from the server to the clients, hence the information sent to all the clients is the same.

Finally, each client updates its current model parameter $\bx_t$ after receiving the update $\bu_t$ from the server (Line~\ref{alg-line:globalUpdate}).

\subsection{Convergence Analysis}

We analyze the convergence upper bound of Algorithm~\ref{alg:mainAlg}. First, we introduce a minimal set of assumptions that are commonly used in the literature~\cite{yang2021achieving}.

\begin{assumption}
    \label{assumption:convergence}
    We assume that the following hold, $\forall n, \bx, \by$.
    \begin{itemize}
    \item Lipschitz gradient:
    \begin{align}
        \norm{\nabla F_n(\bx) - \nabla F_n(\by)} \leq L \norm{\bx - \by}.
    \end{align}
    \item Unbiased stochastic gradient with bounded variance: 
    \begin{align}
        \!\!\!\!\!\!\!\Expectbracket{\bg_n(\bx)} \!=\! \nabla\! F_n(\bx) %
        \textrm{ and }
        \Expectbracketsizable{\big}{\normsqsizable{}{\bg_n(\bx)\! -\! \nabla\! F_n(\bx)}}\! \leq\! \sigma^2. \!\!
    \end{align}
    \item Bounded gradient divergence:
    \begin{align}
        \normsqsizable{}{\nabla F_n(\bx) - \nabla f(\bx)} \leq \epsilon^2.
    \end{align}
    \end{itemize}
\end{assumption}

The gradient divergence bound $\epsilon^2$ captures the degree of heterogeneous (i.e., non-i.i.d.) data across clients. We now introduce our main convergence result (proof is in the appendix). 

\begin{theorem}
\label{theorem:mainConvergence}
When Assumption~\ref{assumption:convergence} holds, if $\frac{1}{N}\sum_{n=1}^N \frac{1}{q_t^n} \leq p$, for all $t$, and $\eta \leq \frac{1}{4Lp}$, then Algorithm~\ref{alg:mainAlg} ensures that
\begin{align}
    &\textstyle\frac{1}{T}\sum_{t=0}^{T-1}\Expectbracketsizable{\big}{\normsqsizable{}{\nabla f(\bx_t)}}  \leq \frac{4\left(f(\bx_0) - f^*\right)}{\eta T} \nonumber \\
    &\textstyle\quad + \frac{4 L^2}{T}\sum_{t=0}^{T-1}\Expectbracketsizable{\big}{\normsqsizable{}{\br_t}} + \frac{4 L^2}{NT}\sum_{t=0}^{T-1}\sum_{n=1}^N \Expectbracketsizable{\big}{\normsqsizable{}{\be_t^n}} \nonumber\\
    &\textstyle\quad + \frac{4 \eta L (\epsilon^2 + \sigma^2)}{NT}\sum_{t=0}^{T-1} \sum_{n=1}^N \Expectbracketsizable{\Big}{\frac{1}{q_t^n}} ,
    \label{eq:convergenceMain}
\end{align}
where $f^*$ is the true minimum of $f(\bx)$, i.e., $f^* := \min_\bx f(\bx)$.
\end{theorem}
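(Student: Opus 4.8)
The plan is to analyze a \emph{virtual, uncompressed sequence} $\tilde{\bx}_t$ that performs exact importance-weighted SGD, to bound its distance to the true iterates $\bx_t$ in terms of the residual errors, and then to run a standard non-convex descent argument on $\tilde{\bx}_t$. First I would set up a telescoping identity. Since $\bx_t-\bx_0=\sum_{s=0}^{t-1}\bu_s$ by Line~\ref{alg-line:globalUpdate} and $\br_0=\mathbf 0$, telescoping the server recursion (Lines~\ref{alg-line:serverAggregation}--\ref{alg-line:serverResidualErrorUpdate}) gives $\br_t=\frac1N\sum_{n=1}^N\sum_{s=0}^{t-1}\bv_s^n-(\bx_t-\bx_0)$, and with $\be_0^n=\mathbf 0$, telescoping the client recursion (Lines~\ref{alg-line:localAccumulation}--\ref{alg-line:errorFeedbackUpdate}) gives $\sum_{s=0}^{t-1}\bv_s^n=-\eta\sum_{s=0}^{t-1}\frac{\Identity_s^n}{q_s^n}\bg_n(\bx_s)-\be_t^n$. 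Substituting and defining $\tilde{\bx}_t:=\bx_0-\frac{\eta}{N}\sum_{n=1}^N\sum_{s=0}^{t-1}\frac{\Identity_s^n}{q_s^n}\bg_n(\bx_s)$ yields the key identity $\bx_t=\tilde{\bx}_t-\br_t-\frac1N\sum_{n=1}^N\be_t^n$ together with $\tilde{\bx}_{t+1}=\tilde{\bx}_t-\eta\bg_t$, where $\bg_t:=\frac1N\sum_{n=1}^N\frac{\Identity_t^n}{q_t^n}\bg_n(\bx_t)$. Writing $\mathcal F_t$ for the history before iteration $t$ and using that $\Identity_t^n$ is independent of $\bg_n(\bx_t)$ and that all clients' randomness is independent, $\Expectcond{\bg_t}{\mathcal F_t}=\frac1N\sum_n\nabla F_n(\bx_t)=\nabla f(\bx_t)$.

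Next I would apply $L$-smoothness of $f$ (inherited from the $F_n$) to get $f(\tilde{\bx}_{t+1})\le f(\tilde{\bx}_t)-\eta\innerprod{\nabla f(\tilde{\bx}_t),\bg_t}+\frac{\eta^2L}{2}\normsq{\bg_t}$, take $\Expectcond{\cdot}{\mathcal F_t}$, split $\innerprod{\nabla f(\tilde{\bx}_t),\bg_t}=\innerprod{\nabla f(\bx_t),\bg_t}+\innerprod{\nabla f(\tilde{\bx}_t)-\nabla f(\bx_t),\bg_t}$, and bound the perturbation inner product by Young's inequality together with $\normsq{\nabla f(\tilde{\bx}_t)-\nabla f(\bx_t)}\le L^2\normsq{\tilde{\bx}_t-\bx_t}\le 2L^2\normsq{\br_t}+\frac{2L^2}{N}\sum_n\normsq{\be_t^n}$ (from the Step-1 identity and $\normsq{a+b}\le2\normsq{a}+2\normsq{b}$). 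Combined with $\Expectcond{\normsq{\bg_t}}{\mathcal F_t}=\Expectcond{\normsq{\bg_t-\nabla f(\bx_t)}}{\mathcal F_t}+\normsq{\nabla f(\bx_t)}$, this reduces the per-iteration bound to $f(\tilde{\bx}_t)$, minus a multiple of $\normsq{\nabla f(\bx_t)}$, plus the residual terms $\normsq{\br_t}$ and $\frac1N\sum_n\normsq{\be_t^n}$, plus the gradient-noise term $\Expectcond{\normsq{\bg_t-\nabla f(\bx_t)}}{\mathcal F_t}$.

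To bound the gradient-noise term, independence across clients gives $\Expectcond{\normsq{\bg_t-\nabla f(\bx_t)}}{\mathcal F_t}=\frac1{N^2}\sum_n\Expectcond{\normsq{\frac{\Identity_t^n}{q_t^n}\bg_n(\bx_t)-\nabla F_n(\bx_t)}}{\mathcal F_t}$, the cross terms vanishing because the summands are conditionally zero-mean and independent. Decomposing each summand as $\frac{\Identity_t^n}{q_t^n}(\bg_n(\bx_t)-\nabla F_n(\bx_t))+(\frac{\Identity_t^n}{q_t^n}-1)\nabla F_n(\bx_t)$, its own cross term vanishes since $\Identity_t^n$ is independent of $\bg_n(\bx_t)$, and $\Expectbracket{(\Identity_t^n/q_t^n)^2}=1/q_t^n$ while $\Expectbracket{(\Identity_t^n/q_t^n-1)^2}=1/q_t^n-1\le 1/q_t^n$. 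Using the bounded-variance assumption on $\bg_n$ and $\normsq{\nabla F_n(\bx_t)}\le 2\epsilon^2+2\normsq{\nabla f(\bx_t)}$ (bounded divergence) then yields a bound of the form $\frac1N\bigl(\frac1N\sum_n\frac1{q_t^n}\bigr)(\sigma^2+2\epsilon^2)+\frac2N\bigl(\frac1N\sum_n\frac1{q_t^n}\bigr)\normsq{\nabla f(\bx_t)}$.

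Plugging this in, the first group, after summing over $t$, produces the final $\frac1{NT}\sum_t\sum_n\Expectbracket{1/q_t^n}$ term of \eqref{eq:convergenceMain}, while the $\normsq{\nabla f(\bx_t)}$ part (bounded using $\frac1N\sum_n 1/q_t^n\le p$) is merged with the existing $\normsq{\nabla f(\bx_t)}$ contributions; the step-size condition $\eta\le\frac1{4Lp}$ is what makes the resulting net coefficient at most $-\eta/4$ after the remaining routine constant estimates. Taking total expectation, summing over $t=0,\dots,T-1$, telescoping with $f(\tilde{\bx}_0)=f(\bx_0)$ and $\Expectbracket{f(\tilde{\bx}_T)}\ge f^*$, and dividing by $\eta T/4$ then gives \eqref{eq:convergenceMain}. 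I expect the main obstacle to be the gradient-noise computation: one must cleanly disentangle ordinary SGD noise ($\sigma^2$) from the extra variance injected by the Bernoulli randomized-compute mechanism, verify that both layers of cross terms vanish by independence, and weight everything so that the $\epsilon^2+\sigma^2$ contribution retains its per-client $1/q_t^n$ factors (matching the last term of \eqref{eq:convergenceMain}) while the part proportional to $\normsq{\nabla f(\bx_t)}$ remains absorbable into the left-hand side under the step-size bound. A secondary nuisance is measurability: $\bx_t$ and $\tilde{\bx}_t$ both depend on all past randomness, so every expectation above must be taken conditionally on $\mathcal F_t$ before the tower property is invoked.
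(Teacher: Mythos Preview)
Your proposal is correct and follows essentially the same route as the paper: define the virtual sequence $\tilde{\bx}_t = \bx_t + \br_t + \frac{1}{N}\sum_n \be_t^n$, verify $\tilde{\bx}_{t+1} = \tilde{\bx}_t - \eta \bg_t$, apply smoothness, handle the cross term $-\eta\langle\nabla f(\tilde{\bx}_t),\nabla f(\bx_t)\rangle$ via Young's inequality plus $L$-Lipschitzness, bound the second moment of $\bg_t$, absorb the $\normsq{\nabla f(\bx_t)}$ piece using $\eta \le 1/(4Lp)$, and telescope. The one substantive difference is in the second-moment step: the paper simply applies Jensen, $\Expectsubbracket{t}{\normsq{\frac{1}{N}\sum_n \frac{\Identity_t^n}{q_t^n}\bg_n(\bx_t)}} \le \frac{1}{N}\sum_n \Expectsubbracket{t}{\normsq{\frac{\Identity_t^n}{q_t^n}\bg_n(\bx_t)}}$, and then uses $\Expectsubbracket{t}{(\Identity_t^n/q_t^n)^2} = 1/q_t^n$; your bias--variance decomposition with independence across clients is sharper (it would yield an extra $1/N$ on the $(\epsilon^2+\sigma^2)$ term), but it tacitly assumes independence of the clients' stochastic gradients and Bernoulli draws, which the paper's Jensen argument does not need. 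Either way the stated bound follows, and your ``routine constant estimates'' are indeed routine once you note $Q_t \ge 1$ so that the extra $\normsq{\nabla f(\bx_t)}$ contributions from the bias term can be merged under the same step-size condition.
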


In Theorem~\ref{theorem:mainConvergence}, we capture the convergence error by the time-averaged expected squared norm of the gradient. We see that the upper bound of the convergence error increases in the squared norm of residual errors $\br_t$ and $\be_t^n$ and decreases in the probability of local computation $q_t^n$. This observation aligns with the intuition that, in general, more communication and computation can improve the convergence with respect to \textit{the number of iterations}. However, doing so would also incur \textit{higher costs} of resource usage. Therefore, we need to \textit{strike a balance between convergence error and resource cost}, after a certain number of iterations $T$. In the optimization problem presented in the next section, we aim at minimizing the convergence error under pre-defined cost constraints.

Before proceeding, we note that the decision variables $\bv_t^n$ and $\bu_t$ do not explicitly appear in the result in Theorem~\ref{theorem:mainConvergence}. For ease of presentation later, we give the following alternative upper bound that is derived from  Theorem~\ref{theorem:mainConvergence}.
Because $\br_0 = \mathbf{0}$ and $\be_0^n = \mathbf{0}$, $\forall n$, according to Algorithm~\ref{alg:mainAlg}, we can further bound the terms in \eqref{eq:convergenceMain} in the following way:
\begin{align}
    &\textstyle\sum_{t=0}^{T-1}\Expectbracketsizable{\big}{\normsqsizable{}{\br_t}} \leq \sum_{t=1}^{T}\Expectbracketsizable{\big}{\normsqsizable{}{\br_t}} \nonumber \\
    &\quad\quad\quad \textstyle= \sum_{t=0}^{T-1}\Expectbracketsizable{\big}{\normsqsizable{}{\br_t + \frac{1}{N}\sum_{n=1}^N \bv_t^n - \bu_t}}, \label{eq:alternativeBoundR}
    \\
    &\textstyle\sum_{t=0}^{T-1}\Expectbracketsizable{\big}{\normsqsizable{}{\be_t^n}} \leq \sum_{t=1}^{T}\Expectbracketsizable{\big}{\normsqsizable{}{\be_t^n}} \nonumber\\
    &\quad\quad\quad =\textstyle \sum_{t=0}^{T-1} \Expectbracketsizable{\Big}{\normsqsizable{\big}{\be_t^n - \frac{\eta \Identity_t^n}{q_t^n}\cdot \bg_n(\bx_t) - \bv_t^n }}, \forall n. \label{eq:alternativeBoundE}
\end{align}

\begin{corollary}
\label{corollary:alternativeConvergence}
Under the same conditions as in Theorem~\ref{theorem:mainConvergence}, an alternative upper bound of $\frac{1}{T}\sum_{t=0}^{T-1}\Expectbracketsizable{\big}{\normsqsizable{}{\nabla f(\bx_t)}}$ holds by replacing the corresponding terms in \eqref{eq:convergenceMain} with \eqref{eq:alternativeBoundR} and \eqref{eq:alternativeBoundE}.
\end{corollary}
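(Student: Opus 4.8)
The plan is to derive Corollary~\ref{corollary:alternativeConvergence} purely by substitution: all of the analytical work already lives in Theorem~\ref{theorem:mainConvergence}, and the two index-shift identities that are needed are exactly the ones displayed in \eqref{eq:alternativeBoundR} and \eqref{eq:alternativeBoundE} immediately above the corollary. So the proof is to take the bound \eqref{eq:convergenceMain}, leave its first line $\frac{4(f(\bx_0)-f^*)}{\eta T}$ and its last line $\frac{4\eta L(\epsilon^2+\sigma^2)}{NT}\sum_{t}\sum_{n}\Expectbracket{\frac{1}{q_t^n}}$ untouched, and replace $\frac{4L^2}{T}\sum_{t=0}^{T-1}\Expectbracket{\normsq{\br_t}}$ by $\frac{4L^2}{T}$ times the right-hand side of \eqref{eq:alternativeBoundR}, and $\frac{4L^2}{NT}\sum_{t=0}^{T-1}\sum_{n=1}^N\Expectbracket{\normsq{\be_t^n}}$ by $\frac{4L^2}{NT}$ times the right-hand side of \eqref{eq:alternativeBoundE} summed over $n$. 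Since the hypotheses on $p$ and $\eta$ are precisely those of Theorem~\ref{theorem:mainConvergence}, they carry over verbatim.

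For completeness I would spell out the two index shifts. Because $\br_0 = \mathbf{0}$ by Line~\ref{alg-line:initialization1}, we have $\sum_{t=0}^{T-1}\Expectbracket{\normsq{\br_t}} = \sum_{t=1}^{T-1}\Expectbracket{\normsq{\br_t}} \le \sum_{t=1}^{T}\Expectbracket{\normsq{\br_t}}$, where the last step only appends the nonnegative term $\Expectbracket{\normsq{\br_T}}$. Re-indexing the right-hand side as $\sum_{t=0}^{T-1}\Expectbracket{\normsq{\br_{t+1}}}$ and inserting the server recursion $\br_{t+1} = \ba_t - \bu_t = \br_t + \frac{1}{N}\sum_{n=1}^N\bv_t^n - \bu_t$ from Lines~\ref{alg-line:serverAggregation}--\ref{alg-line:serverResidualErrorUpdate} gives \eqref{eq:alternativeBoundR}. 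The client term is handled identically: $\be_0^n = \mathbf{0}$ by Line~\ref{alg-line:initialization2} yields $\sum_{t=0}^{T-1}\Expectbracket{\normsq{\be_t^n}} \le \sum_{t=1}^{T}\Expectbracket{\normsq{\be_t^n}} = \sum_{t=0}^{T-1}\Expectbracket{\normsq{\be_{t+1}^n}}$, and the client recursion $\be_{t+1}^n = \bb_t^n - \bv_t^n = \be_t^n - \frac{\eta\Identity_t^n}{q_t^n}\bg_n(\bx_t) - \bv_t^n$ from Lines~\ref{alg-line:localAccumulation}--\ref{alg-line:errorFeedbackUpdate} gives \eqref{eq:alternativeBoundE}. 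Plugging both bounds back into \eqref{eq:convergenceMain} produces the alternative bound.

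I do not expect a genuine obstacle here: this corollary is a bookkeeping restatement whose purpose is to expose the actual decision variables $\bv_t^n$ and $\bu_t$ (which the control algorithm of Section~\ref{sec:Control} optimizes over) rather than the internal states $\br_t$ and $\be_t^n$. The only two points that deserve a line of care are (i) the telescoping/index-shift step, i.e.\ checking that dropping the $t=0$ term and appending the $t=T$ term preserves the inequality direction --- which holds because a squared norm is nonnegative and the $t=0$ states vanish by the initialization in Algorithm~\ref{alg:mainAlg} --- and (ii) that the expectation commutes with the deterministic update maps of Algorithm~\ref{alg:mainAlg}, so that \eqref{eq:alternativeBoundR}--\eqref{eq:alternativeBoundE} are legitimate identities in expectation over the Bernoulli participation variables $\Identity_t^n$ and the stochastic-gradient noise.
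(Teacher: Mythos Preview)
Your proposal is correct and matches the paper's own argument exactly: the paper also derives the corollary by noting $\br_0=\mathbf{0}$ and $\be_0^n=\mathbf{0}$, shifting the summation index to obtain \eqref{eq:alternativeBoundR}--\eqref{eq:alternativeBoundE}, and then substituting into \eqref{eq:convergenceMain}. If anything, your write-up is slightly more explicit about the telescoping step than the paper's in-text derivation.
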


\section{Control Decisions}
\label{sec:Control}

\subsection{Problem Formulation}

The goal of our decision making problem is to determine the set of control parameters $\{q_t^n\}$, $\{\bv_t^n\}$, $\{\bu_t\}$ over time, to minimize the convergence error subject to resource cost constraints. Similar to existing works \cite{WangJSAC2019,shi2020joint,perazzone2022communication,cui2022optimal,luo2021cost,luo2022tackling}, we use the convergence upper bound as an approximation to the actual error, because it is generally not possible to know exactly how different configurations affect the actual error.

\subsubsection{Instantaneous Costs}
Let $\lambda_t^n(q_t^n)$ denote the \textit{computation cost} in iteration $t$ at client $n$. Also let $\varphi_t^n(\bv_t^n)$ and $\psi_t(\bu_t)$ denote the \textit{communication cost} at client $n$ and the server, respectively, both in iteration $t$. Note that the cost functions $\lambda_t^n(\cdot)$, $\varphi_t^n(\cdot)$, and $\psi_t(\cdot)$ themselves can be different for different $t$ and $n$. That means, even if $q_t^n = q_{t'}^n$ for $t\neq t'$, we may have $\lambda_t^n(q_t^n) \neq \lambda_{t'}^n(q_{t'}^n)$, for instance. When there is no ambiguity, we omit the arguments $q_t^n$, $\bv_t^n$, and $\bu_t$ for simplicity, and only write $\lambda_t^n$, $\varphi_t^n$, and $\psi_t$ which are implicitly dependent on $q_t^n$, $\bv_t^n$, and $\bu_t$, respectively.

\subsubsection{Target Average Costs (Constraints)}
We further denote the \textit{target time-averaged computation cost} by $\tilde{\lambda}_n$ (at client $n$), and the \textit{target time-averaged communication costs} by $\tilde{\varphi}_n$ (at client $n$) and $\tilde{\psi}$ (at the server). These target costs are given as inputs to our control problem. They represent how much cost each of the clients and the server would like to spend on average for the FL task. The notion of \textit{cost} in this paper represents a generic metric. For example, it can stand for the percentage of consumed resources among all the available resources, monetary cost, energy usage, or a combination of these and other possible measures.

\subsubsection{Overall Control Problem}
With these definitions, we are ready to introduce our problem of minimizing the convergence upper bound given by Corollary~\ref{corollary:alternativeConvergence} under cost constraints.
Ignoring constants and common coefficients, we first define the following objective:
\begin{align}
\textstyle \mathcal{G} & := \textstyle \frac{L}{T}\sum_{t=0}^{T-1}\Expectbracketsizable{\Big}{\normsqsizable{\big}{\br_t + \frac{1}{N}\sum_{n=1}^N \bv_t^n - \bu_t}}\nonumber\\
&\textstyle\quad\quad + \frac{ L}{NT}\sum_{t=0}^{T-1}\sum_{n=1}^N \Expectbracketsizable{\Big}{\normsqsizable{\big}{\be_t^n - \frac{\eta \Identity_t^n}{q_t^n}\cdot \bg_n(\bx_t) - \bv_t^n }} \nonumber\\
&\textstyle\quad\quad + \frac{\eta (\epsilon^2 + \sigma^2)}{NT}\sum_{t=0}^{T-1} \sum_{n=1}^N \Expectbracketsizable{\big}{\frac{1}{q_t^n}} .
\label{eq:controlObjective}
\end{align}
Then, our overall optimization problem is as follows:
\begin{align}
    \!\!\!\!\mathrm{\textbf{P1:}} \,\, \textstyle\min_{\{q_t^n\}, \{\bv_t^n\}, \{\bu_t\}} &\quad \mathcal{G} \\
    \mathrm{s.t.}\quad\quad\quad\quad\quad\,\,\,\, &\quad \textstyle \frac{1}{T}\sum_{t=0}^{T-1} \Expectbracket{\lambda_t^n} \leq \tilde{\lambda}_n,\, \forall n \label{eq:controlProblemConstraint1}\\
    &\quad \textstyle \frac{1}{T}\sum_{t=0}^{T-1} \Expectbracket{\varphi_t^n} \leq \tilde{\varphi}_n,\, \forall n \label{eq:controlProblemConstraint2}\\
    &\quad \textstyle \frac{1}{T}\sum_{t=0}^{T-1} \Expectbracket{\psi_t} \leq \tilde{\psi}. \label{eq:controlProblemConstraint3}
\end{align}

\subsubsection{Challenges}
There are several challenges in solving the problem P1 directly. \textit{First,} there are three terms in the objective $\mathcal{G}$ defined in~\eqref{eq:controlObjective}, which have different coefficients. It is generally \textit{difficult to estimate these coefficients} as they are related to characteristics of loss functions and their stochastic gradients. \textit{Second,} in each iteration $t$ of Algorithm~\ref{alg:mainAlg}, there is a sequential order that first determines (according to $\Identity_t^n$) whether each client $n$ computes an update, then transmits the update vector from clients to the server and finally from the server to clients. Considering the second term of~\eqref{eq:controlObjective}, in practice, $\bg_n(\bx_t)$ is only computed if $\Identity_t^n = 1$, but the value of $\Identity_t^n$ is \textit{unknown} before the value of $q_t^n$ is determined. Thus, we cannot know $\bg_n(\bx_t)$ when determining $q_t^n$, which makes it impossible to use the exact value of the second term of~\eqref{eq:controlObjective} in the determination of $q_t^n$. Similarly, the value of $\frac{1}{N}\sum_{n=1}^N \bv_t^n$ in the first term of~\eqref{eq:controlObjective} is \textit{unknown} before each client $n$ has actually computed its $\bv_t^n$. \textit{Third,} the overall impact of control decisions is \textit{correlated across different iterations} through both the objective function and constraints, but we do not have prior knowledge of resource costs in practice. Therefore, we need an online algorithm that does not rely on prior knowledge.

To overcome these challenges, we first approximate P1 with three sub-problems that sequentially determine $\{q_t^n\}$, $\{\bv_t^n\}$, and $\{\bu_t\}$ in Section~\ref{subsec:sequentialApprox}. Then, we present an online algorithm for each sub-problem in Section~\ref{subsec:onlineDecision}.

\subsection{Approximation by Sequential Decision Making}
\label{subsec:sequentialApprox}

We decompose P1 into three sub-problems as follows. In each sub-problem, one set of decision variables is determined by minimizing its corresponding term in \eqref{eq:controlObjective}. We substitute expressions inside the norms using the definitions of $\bb_t^n$ and $\ba_t$ in Line~\ref{alg-line:localAccumulation} and Line~\ref{alg-line:serverAggregation} of Algorithm~\ref{alg:mainAlg}, respectively.
\begin{align}
    \mathrm{\textbf{P2.1:}}\quad \textstyle\min_{\{q_t^n\}} &\quad \textstyle \frac{1}{NT}\sum_t \sum_n \Expectbracketsizable{\big}{\frac{1}{q_t^n}}  \label{eq:controlProblemSub1} \\
    \mathrm{s.t.}\quad\quad &\quad \textrm{Constraint~\eqref{eq:controlProblemConstraint1}} . \nonumber 
    \\
    \mathrm{\textbf{P2.2:}}\quad \textstyle\min_{\{\bv_t^n\}} &\quad \textstyle \frac{1}{NT}\sum_t\sum_n \Expectbracketsizable{\big}{\normsqsizable{}{\bb_t^n - \bv_t^n }} \label{eq:controlProblemSub2}  \\
    \mathrm{s.t.}\quad\quad &\quad \textrm{Constraint~\eqref{eq:controlProblemConstraint2}}. \nonumber 
    \\
    \mathrm{\textbf{P2.3:}}\quad \textstyle\min_{\{\bu_t\}} &\quad \textstyle \frac{1}{T}\sum_t\Expectbracketsizable{\big}{\normsqsizable{}{\ba_t - \bu_t}} \label{eq:controlProblemSub3}  \\
    \mathrm{s.t.}\quad\quad &\quad \textrm{Constraint~\eqref{eq:controlProblemConstraint3}}. \nonumber
\end{align}

With this decomposition, we first solve P2.1 to obtain $\{q_t^n\}$. Then, we consider $\{q_t^n\}$ as given and solve for $\{\bv_t^n\}$ in P2.2. Finally, we consider $\{\bv_t^n\}$ as given and solve for $\{\bu_t\}$ in P2.3. We can regard this sequential decision-making procedure as an approximation to the original problem P1. The exact approximation error is difficult to analyze and is left for future work. However, we will see in Section~\ref{sec:experiments} that the solution obtained by this approximation, together with the online algorithm described in Section~\ref{subsec:onlineDecision}, provides performance gain compared to baselines in experiments.

\subsection{Online Decision Making}
\label{subsec:onlineDecision}

The problems P2.1--P2.3 are still difficult to solve directly, because both the objective functions and constraints are averaged over time, and it is difficult to predict future costs in practice. Therefore, we present an online decision making approach in the following, where the quantities $q_t^n$, $\bv_t^n$, and $\bu_t$ are determined within each iteration $t$ without knowledge of statistics in future iterations.

\subsubsection{Methodology and Challenges}
\label{subsec:LyapunovChallenges}
Our approach is based on the Lyapunov drift-plus-penalty framework~\cite{neely2010stochastic}, but with some \textit{notable differences}. First, while infinite $T$ is the primary focus in~\cite{neely2010stochastic}, we allow \textit{finite}~$T$ in this paper, both in the problem formulation (see P1 and P2 above) and in our analysis later in Section~\ref{subsec:controlAlgAnalysis}. This consideration is because, in practice, we usually train the model only for a finite number of iterations.
Second, while P2.1 can depend on an underlying system state (i.e., $\omega(t)$ defined in~\cite{neely2010stochastic}) that is independent across time $t$, we emphasize that the underlying states of P2.2 and P2.3 are both \textit{time-dependent and also dependent on previous decisions} made by the control algorithm. To see this, note that the quantity $\bb_t^n$ in P2.2 depends on the stochastic gradient computed on the model parameter $\bx_t$, and the value of $\bx_t$ is related to the decisions on $q_\tau^n$, $\bv_\tau^n$, $\bu_\tau$ made in previous iterations $\tau < t$. Similarly, $\ba_t$ in P2.3 also depends on past decisions and other random outcomes. This dependency makes it substantially harder to analyze P2.2 and P2.3, where the standard results in\cite{neely2010stochastic} no longer hold.

\subsubsection{Virtual Queues}
We define virtual queues to capture the constraints \eqref{eq:controlProblemConstraint1}--\eqref{eq:controlProblemConstraint3}. The virtual queues lengths $\Lambda_t^n$, $\Phi_t^n$, and $\Psi_t$ evolve according to the following recursions:\footnote{In practice, we may set the minimum queue length to a very small positive number instead of zero, to avoid large instantaneous costs from being incurred and added to the queue (see also the objectives of problem P3).}
\begin{align}
    \Lambda_{t+1}^n &= \max\{0, \Lambda_t^n + \lambda_t^n - \tilde{\lambda}_n\}, \quad\forall n, \label{eq:virtualQueueUpdate1} \\
    \Phi_{t+1}^n &= \max\{0, \Phi_t^n + \varphi_t^n - \tilde{\varphi}_n\}, \quad\forall n , \label{eq:virtualQueueUpdate2} \\
    \Psi_{t+1} &= \max\{0, \Psi_t + \psi_t - \tilde{\psi}\}. \label{eq:virtualQueueUpdate3}
\end{align}
Intuitively, these virtual queues capture the accumulated violation of constraints \eqref{eq:controlProblemConstraint1}--\eqref{eq:controlProblemConstraint3}. Hence, we would like to jointly minimize the objectives \eqref{eq:controlProblemSub1}--\eqref{eq:controlProblemSub3} and the virtual queue lengths. In our problem formulation (P1 and P2), the cost definitions and their constraints are separate across clients and the server, thus we can \textit{distributedly} optimize for each entity separately. An extension to settings with coupled cost constraints is possible by sharing queue length information across clients and the server.

\subsubsection{Decision Problem for Each Iteration}
Define a constant $V>0$ that will be discussed further in Section~\ref{subsec:controlAlgAnalysis}.
We have the following drift-plus-penalty minimization problems for each client $n$ (P3.1 and P3.2) and server (P3.3) in iteration~$t$.
\begin{align}
    \textstyle\mathrm{\textbf{P3.1:}}\quad\min_{q_t^n} &\textstyle\quad \textstyle \frac{V}{q_t^n} + \Lambda_t^n \big(\lambda_t^n - \tilde{\lambda}_n\big). \label{eq:controlProblemLyapunov1} 
    \\
    \textstyle\mathrm{\textbf{P3.2:}}\quad \min_{\bv_t^n} &\textstyle\quad \textstyle V\normsqsizable{}{\bb_t^n - \bv_t^n } + \Phi_t^n \left(\varphi_t^n - \tilde{\varphi}_n\right). \label{eq:controlProblemLyapunov2}  
    \\
    \textstyle\mathrm{\textbf{P3.3:}}\quad \min_{\bu_t} &\textstyle\quad \textstyle V\normsqsizable{}{\ba_t - \bu_t} + \Psi_t \big(\psi_t - \tilde{\psi}\big). \label{eq:controlProblemLyapunov3}  
\end{align}
Note that when solving P3.1--P3.3, we consider the virtual queue lengths $\Lambda_t^n$, $\Phi_t^n$, and $\Psi_t$ as well as the vectors $\bb_t^n$ and $\ba_t$ as given variables. However, these variables are inherently random due to the random noise in stochastic gradient and probabilistic client sampling, so the objectives and constraints in P1 and P2 are expressed as expectations.

The control decisions obtained from P3.1--P3.3 and virtual queue updates \eqref{eq:virtualQueueUpdate1}--\eqref{eq:virtualQueueUpdate3} are combined with Algorithm~\ref{alg:mainAlg} to provide the values of control variables. The full procedure is shown in Algorithm~\ref{alg:controlAlg}, where we may choose a non-zero initial queue length $\initqueue$ to prevent a high degree of constraint violation in initial iterations (see \eqref{eq:controlProblemLyapunov1}--\eqref{eq:controlProblemLyapunov3} and Section~\ref{subsec:controlAlgAnalysis}).\footnote{The idea of setting initial values for (virtual) queues is called place-holder backlog in \cite{neely2010stochastic}, but its original goal is to improve the performance-delay trade-off when $T\rightarrow \infty$. In contrast, we consider finite $T$ in our case, and the ``place-holder backlog'' can guarantee arbitrarily small constraint violation.}

\begin{algorithm}[tb]
\footnotesize
\caption{Online Control}
\label{alg:controlAlg}

\SetKwFor{EachClient}{each client $n \leftarrow 1,\ldots,N$ in parallel:}{}{end}
\SetKwFor{Server}{the server:}{}{end}
\SetKwInput{Constants}{Constants}
\SetKwInput{Decisions}{Control decisions}

\Constants{$V > 0$, initial queue length $\initqueue \geq 0$} 
\KwOut{$\{q_t^n\}$, $\{\bv_t^n\}$, $\{\bu_t\}$
}

Run Lines \ref{alg-line:initialization1}--\ref{alg-line:initialization2} in Algorithm~\ref{alg:mainAlg};

$\Lambda_0^n = \initqueue$, $\forall n$; $\Phi_0^n = \initqueue$, $\forall n$; $\Psi_0 = \initqueue$;

\For{$t \leftarrow 0, \ldots, T-1$}{
    \EachClient{}{
        Get $q_t^n$ from P3.1 and update $\Lambda_t^n$ using \eqref{eq:virtualQueueUpdate1};
        
        Run Lines \ref{alg-line:randomParticipation}--\ref{alg-line:localAccumulation} of Algorithm~\ref{alg:mainAlg};
        
        Get $\bv_t^n$ from P3.2 and update $\Phi_t^n$ using \eqref{eq:virtualQueueUpdate2};
        
        Run Lines \ref{alg-line:errorFeedbackUpdate}--\ref{alg-line:sendToServer} of Algorithm~\ref{alg:mainAlg};
    }
    
    \Server{}{
        Run Line \ref{alg-line:serverAggregation} of Algorithm~\ref{alg:mainAlg};

        Get $\bu_t$ from P3.3 and update $\Psi_t$ using \eqref{eq:virtualQueueUpdate3};

        Run Lines \ref{alg-line:serverResidualErrorUpdate}--\ref{alg-line:sendToClients} of Algorithm~\ref{alg:mainAlg};
    }

    Run Lines \ref{alg-line:finalSyncStart}--\ref{alg-line:globalUpdate} of Algorithm~\ref{alg:mainAlg};

}
\end{algorithm}

\subsection{Analysis of Online Control Algorithm}
\label{subsec:controlAlgAnalysis}

We discuss the optimality and constraint satisfaction of approximately solving P2.1--P2.3 via minimizing the drift-plus-penalty objectives \eqref{eq:controlProblemLyapunov1}--\eqref{eq:controlProblemLyapunov3} in P3.1--P3.3, as in Algorithm~\ref{alg:controlAlg}. Our discussion shares similarities with \cite{neely2010stochastic}. 
However, there are some key differences and challenges as discussed in Section~\ref{subsec:LyapunovChallenges}.
With a slight abuse of notation, we reuse $q_t^n, \bv_t^n, \bu_t, \lambda_t^n, \varphi_t^n, \psi_t$ to denote the control variables and their corresponding costs obtained from the \textit{solutions} of P3.1--P3.3. We first make an assumption to facilitate the analysis.
\begin{assumption}
    \label{assumption:boundedCostsAndObj}
    We assume that \emph{i)}~$q_t^n \in [\sfrac{1}{D}, 1]$, \emph{ii)}~$\normsq{\bb_t^n}, \normsq{\ba_t} \in [0, D]$, \emph{iii)}~$\lambda_t^n, \varphi_t^n, \psi_t, \tilde{\lambda}_n, \tilde{\varphi}_n, \tilde{\psi} \in [0, \sqrt{2B}]$, \emph{iv)}~$\tilde{\lambda}_n \geq \lambda_t^n(\frac{1}{D})$ (i.e., cost computed at $q_t^n = \sfrac{1}{D}$), \emph{v)}~$\varphi_t^n(\mathbf{0}) = \psi_t(\mathbf{0}) = 0$, for some $D >0, B > 0$.
\end{assumption}
In this assumption, the bound on $q_t^n$ usually holds for some $D>0$ as long as the virtual queue length $\Lambda_t^n$ is bounded. 
The rationale behind the bounds on $\normsqsizable{}{\bb_t^n}$ and $\normsqsizable{}{\ba_t}$ is that, although the residual errors are accumulated over time, they usually will not be arbitrarily large because the parameter updates get smaller when the gradient approaches zero.
Note that this is only needed for Theorems~\ref{prop:constraintViolation} and \ref{prop:objectiveBound} below, while our algorithm can still work empirically without Assumption~\ref{assumption:boundedCostsAndObj}.

\begin{theorem}
\label{prop:constraintViolation}
Under Assumption~\ref{assumption:boundedCostsAndObj}, solving P3.1--P3.3 for each~$t$ ensures the following bounds on constraint violation:
\begin{align}
    \textstyle \frac{1}{T}\sum_{t=0}^{T-1} \Expectbracket{\lambda_t^n} - \tilde{\lambda}_n & \textstyle \leq \sqrt{\frac{\initqueue^2}{T^2} + \frac{2VD+2B}{T}}- \frac{\initqueue}{T}, \label{eq:constraintViolation1} \\
    \textstyle \frac{1}{T}\sum_{t=0}^{T-1} \Expectbracket{\varphi_t^n} - \tilde{\varphi}_n & \textstyle \leq \sqrt{\frac{\initqueue^2}{T^2} + \frac{2VD+2B}{T}}- \frac{\initqueue}{T}, \label{eq:constraintViolation2} \\
    \textstyle \frac{1}{T}\sum_{t=0}^{T-1} \Expectbracket{\psi_t} - \tilde{\psi} & \textstyle \leq \sqrt{\frac{\initqueue^2}{T^2} + \frac{2VD+2B}{T}}- \frac{\initqueue}{T}. \label{eq:constraintViolation3} 
\end{align}
\end{theorem}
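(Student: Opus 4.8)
The plan is to analyze each virtual queue separately using the standard Lyapunov drift argument, but carried out carefully over a finite horizon $T$ and with the nonzero initial condition $\initqueue$. I will focus on the constraint~\eqref{eq:controlProblemConstraint1} and queue $\Lambda_t^n$; the arguments for $\Phi_t^n$ and $\Psi_t$ are identical in structure. First I would square the queue recursion: since $\Lambda_{t+1}^n = \max\{0, \Lambda_t^n + \lambda_t^n - \tilde\lambda_n\}$, we get $(\Lambda_{t+1}^n)^2 \leq (\Lambda_t^n + \lambda_t^n - \tilde\lambda_n)^2 = (\Lambda_t^n)^2 + 2\Lambda_t^n(\lambda_t^n - \tilde\lambda_n) + (\lambda_t^n - \tilde\lambda_n)^2$. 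Using part~iii) of Assumption~\ref{assumption:boundedCostsAndObj}, the last term is at most $2B$ (bounding $(\lambda_t^n - \tilde\lambda_n)^2$ by $\max\{(\lambda_t^n)^2, \tilde\lambda_n^2\} \le 2B$, or more simply by the crude bound that each term lies in $[0,\sqrt{2B}]$). Rearranging gives the one-step drift bound $\frac{1}{2}\big((\Lambda_{t+1}^n)^2 - (\Lambda_t^n)^2\big) \le \Lambda_t^n(\lambda_t^n - \tilde\lambda_n) + B$.

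Next I would relate the drift term $\Lambda_t^n(\lambda_t^n - \tilde\lambda_n)$ to the P3.1 objective. Since $q_t^n$ is chosen to \emph{minimize} $\frac{V}{q_t^n} + \Lambda_t^n(\lambda_t^n - \tilde\lambda_n)$, comparing against the feasible choice $q_t^n = \tfrac{1}{D}$ (whose cost satisfies $\lambda_t^n(\tfrac1D) \le \tilde\lambda_n$ by part~iv)) yields $\frac{V}{q_t^n} + \Lambda_t^n(\lambda_t^n - \tilde\lambda_n) \le VD + \Lambda_t^n\big(\lambda_t^n(\tfrac1D) - \tilde\lambda_n\big) \le VD$. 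Hence $\Lambda_t^n(\lambda_t^n - \tilde\lambda_n) \le VD - \frac{V}{q_t^n} \le VD$, since $\frac{V}{q_t^n}>0$. Plugging this into the drift inequality, taking expectations, and summing over $t = 0, \ldots, T-1$ telescopes the left-hand side to $\frac12\Expectbracket{(\Lambda_T^n)^2} - \frac12 (\Lambda_0^n)^2 = \frac12\Expectbracket{(\Lambda_T^n)^2} - \frac12\initqueue^2$, giving $\frac12\Expectbracket{(\Lambda_T^n)^2} \le \frac12\initqueue^2 + (VD + B)T$, i.e. $\Expectbracket{(\Lambda_T^n)^2} \le \initqueue^2 + (2VD+2B)T$.

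Finally I would extract the constraint-violation bound from the queue length. Iterating the recursion $\Lambda_{t+1}^n \ge \Lambda_t^n + \lambda_t^n - \tilde\lambda_n$ gives $\Lambda_T^n \ge \Lambda_0^n + \sum_{t=0}^{T-1}(\lambda_t^n - \tilde\lambda_n) = \initqueue + \sum_{t=0}^{T-1}(\lambda_t^n - \tilde\lambda_n)$. Taking expectations and rearranging, $\frac1T\sum_{t=0}^{T-1}\Expectbracket{\lambda_t^n} - \tilde\lambda_n \le \frac{\Expectbracket{\Lambda_T^n} - \initqueue}{T} \le \frac{\sqrt{\Expectbracket{(\Lambda_T^n)^2}} - \initqueue}{T}$ by Jensen's inequality. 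Substituting the bound on $\Expectbracket{(\Lambda_T^n)^2}$ from the previous step gives exactly~\eqref{eq:constraintViolation1}. Repeating verbatim with P3.2 (comparing $\bv_t^n$ against the feasible choice $\bv_t^n = \mathbf 0$, using part~v) so $\varphi_t^n(\mathbf 0)=0 \le \tilde\varphi_n$ and part~ii) so $\normsq{\bb_t^n - \mathbf 0}=\normsq{\bb_t^n} \le D$) yields~\eqref{eq:constraintViolation2}, and likewise P3.3 yields~\eqref{eq:constraintViolation3}.

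The only subtle point — and the main place the finite-$T$, decision-dependent-state feature matters — is in the drift step: because $\bb_t^n$ and $\ba_t$ (the "underlying states" of P3.2 and P3.3) depend on past decisions and random outcomes, one must be careful that the comparison argument only invokes the per-slot optimality of P3.1--P3.3 given the realized state, and then take expectations \emph{afterwards}; no stationarity or independence of the state process is needed. The uniform bounds in Assumption~\ref{assumption:boundedCostsAndObj}~(ii)--(iii) are exactly what makes the telescoped sum and the "slack" comparison ($\bv_t^n=\mathbf 0$, $q_t^n=1/D$) valid regardless of how the state was generated, so the standard infinite-horizon drift machinery goes through once these bounds are in hand.
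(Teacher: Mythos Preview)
Your proposal is correct and follows essentially the same route as the paper: bound the one-step Lyapunov drift by $\Lambda_t^n(\lambda_t^n-\tilde\lambda_n)+B$, use the per-slot optimality of P3.1 against the feasible choice $q_t^n=1/D$ (and $\bv_t^n=\mathbf 0$, $\bu_t=\mathbf 0$ for P3.2--P3.3) to get $\Lambda_t^n(\lambda_t^n-\tilde\lambda_n)\le VD$, telescope, and convert the resulting bound on the terminal queue into the time-average constraint violation via $\lambda_t^n-\tilde\lambda_n\le \Lambda_{t+1}^n-\Lambda_t^n$. The only cosmetic difference is that the paper keeps the drift bound pointwise throughout and takes expectation at the very end, whereas you take expectations earlier and invoke Jensen on $\Expectbracket{\Lambda_T^n}\le \sqrt{\Expectbracket{(\Lambda_T^n)^2}}$; since the drift bound $VD+B$ is deterministic, both orderings yield the same inequality. (One small slip: $(\lambda_t^n-\tilde\lambda_n)^2\le \max\{(\lambda_t^n)^2,\tilde\lambda_n^2\}$ is false in general, but your alternative ``each term lies in $[0,\sqrt{2B}]$'' argument is the correct one.)
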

\begin{proof}
The Lyapunov drift of virtual queue length $\Lambda_t^n$ is
\begin{align*}
    \textstyle \Delta(\Lambda_t^n) & := \textstyle \frac{1}{2}\!\left[(\Lambda_{t+1}^n)^2 \!-\! (\Lambda_t^n)^2\right] \leq  \frac{1}{2}\!\big[\big(\Lambda_t^n \!+\! \lambda_t^n \!-\! \tilde{\lambda}_n\big)^2 \!-\! (\Lambda_t^n)^2\big] \\
    &= \textstyle \Lambda_t^n \big(\lambda_t^n - \tilde{\lambda}_n\big) + \frac{(\lambda_t^n - \tilde{\lambda}_n)^2}{2}
    \leq \textstyle \Lambda_t^n \big(\lambda_t^n - \tilde{\lambda}_n\big) + B.
\end{align*}
We have $\Lambda_t^n \big(\lambda_t^n - \tilde{\lambda}_n\big) \leq VD$, because otherwise setting $q_t^n = \frac{1}{D}$ will give a smaller value of the objective \eqref{eq:controlProblemLyapunov1} due to Assumption~\ref{assumption:boundedCostsAndObj}. Thus, $\Delta(\Lambda_t^n) \leq VD+B$.
We further note that
\begin{align*}
     \textstyle \frac{1}{2}(\Lambda_T^n)^2 - \frac{1}{2}(\Lambda_0^n)^2 = \sum_{t=0}^{T-1} \Delta(\Lambda_t^n) \leq VDT+BT.
\end{align*}
Hence, $\Lambda_T^n \leq \sqrt{(\Lambda_0^n)^2 + 2VDT+2BT}$. 
From \eqref{eq:virtualQueueUpdate1}, we have $\Lambda_t^n + \lambda_t^n - \tilde{\lambda}_n \leq \Lambda_{t+1}^n$, thus $\lambda_t^n - \tilde{\lambda}_n \leq \Lambda_{t+1}^n - \Lambda_t^n$.
This gives
\begin{align*}
     \textstyle \frac{1}{T}\sum_{t=0}^{T-1} \lambda_t^n - \tilde{\lambda}_n \leq \frac{\Lambda_T^n - \Lambda_0^n}{T} \leq \sqrt{\frac{\initqueue^2}{T^2} + \frac{2VD+2B}{T}} - \frac{\initqueue}{T},
\end{align*}
where we recall that $\Lambda_0^n = \initqueue$.
After taking expectation on both sides, we have proven \eqref{eq:constraintViolation1}. The results in \eqref{eq:constraintViolation2} and \eqref{eq:constraintViolation3} can be proven using a similar procedure.
\end{proof}

\begin{theorem}
    \label{prop:objectiveBound}
    Under Assumption~\ref{assumption:boundedCostsAndObj}, solving P3.1--P3.3 for each~$t$ gives the following bounds related to the objectives \eqref{eq:controlProblemSub1}--\eqref{eq:controlProblemSub3} of P2.1--P2.3:
    \begin{align}
        \textstyle\frac{1}{T}\sum_{t=0}^{T-1}\Expectbracket{\frac{1}{q_t^n}} &\textstyle\leq \mathrm{OPT}_{q_n} + \frac{B}{V} + \frac{\initqueue^2}{2VT}, \label{eq:objectiveLyapunovBound1} \\
        \textstyle\normsq{\bb_t^n - \bv_t^n } & \textstyle \leq \min\left\{D,\frac{\Phi_t^n \sqrt{2B}}{V}\right\},\label{eq:objectiveLyapunovBound2} \\
        \textstyle\normsq{\ba_t - \bu_t } & \textstyle \leq \min\left\{D, \frac{\Psi_t^n \sqrt{2B}}{V}\right\}.\label{eq:objectiveLyapunovBound3}
    \end{align}
    where $\mathrm{OPT}_{q_n}$ denotes the optimal value of the time-averaged objective given by a possibly randomized offline algorithm that has complete statistics of all $T$ iterations.
\end{theorem}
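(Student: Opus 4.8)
The plan is to prove the three bounds by separate arguments: \eqref{eq:objectiveLyapunovBound1} via a Lyapunov drift-plus-penalty comparison against a queue-independent randomized policy, and \eqref{eq:objectiveLyapunovBound2}--\eqref{eq:objectiveLyapunovBound3} via elementary per-slot comparisons of the optima of P3.2 and P3.3 against two hand-picked feasible points. For \eqref{eq:objectiveLyapunovBound1}, I would start from the drift bound $\Delta(\Lambda_t^n) \leq \Lambda_t^n(\lambda_t^n - \tilde{\lambda}_n) + B$ already established in the proof of Theorem~\ref{prop:constraintViolation} (where Assumption~\ref{assumption:boundedCostsAndObj}(iii) gives $\frac{1}{2}(\lambda_t^n - \tilde{\lambda}_n)^2 \leq B$), add $\frac{V}{q_t^n}$ to both sides, and note that the solution of P3.1 minimizes $\frac{V}{q_t^n} + \Lambda_t^n(\lambda_t^n - \tilde{\lambda}_n)$ over all feasible $q_t^n$ for the given queue length $\Lambda_t^n$. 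Hence this quantity is no larger than its value under any alternative (in particular, queue-independent and possibly randomized) choice $q_t^{n\star}$. Since the underlying state of P2.1 is i.i.d.\ across $t$, there exists such a policy with $\Expectbracket{\lambda_t^n(q_t^{n\star})} \leq \tilde{\lambda}_n$ per slot and time-averaged objective equal to $\mathrm{OPT}_{q_n}$ (up to an arbitrarily small slack that is then sent to zero). Taking conditional expectation given the history and using independence so that $\Expectbracket{\Lambda_t^n(\lambda_t^n(q_t^{n\star}) - \tilde{\lambda}_n)} \leq 0$ yields $\Expectbracket{\Delta(\Lambda_t^n)} + V\Expectbracket{\frac{1}{q_t^n}} \leq B + V\Expectbracket{\frac{1}{q_t^{n\star}}}$. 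Summing over $t = 0,\dots,T-1$, the drift telescopes to $\frac{1}{2}\Expectbracket{(\Lambda_T^n)^2} - \frac{1}{2}\initqueue^2$; dropping the nonnegative terminal term and dividing by $VT$ gives \eqref{eq:objectiveLyapunovBound1}.

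For \eqref{eq:objectiveLyapunovBound2}, I would compare the P3.2 optimum $\bv_t^n$ against two feasible choices. Comparing against $\bv_t^n = \mathbf{0}$, whose cost is $\varphi_t^n(\mathbf{0}) = 0$ by Assumption~\ref{assumption:boundedCostsAndObj}(v): optimality gives $V\normsq{\bb_t^n - \bv_t^n} + \Phi_t^n(\varphi_t^n - \tilde{\varphi}_n) \leq V\normsq{\bb_t^n} - \Phi_t^n\tilde{\varphi}_n$, so after cancelling $-\Phi_t^n\tilde{\varphi}_n$ and discarding the nonnegative term $\Phi_t^n\varphi_t^n$ we obtain $V\normsq{\bb_t^n - \bv_t^n} \leq V\normsq{\bb_t^n} \leq VD$ by Assumption~\ref{assumption:boundedCostsAndObj}(ii). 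Comparing against $\bv_t^n = \bb_t^n$, which makes the quadratic term vanish and whose cost satisfies $\varphi_t^n(\bb_t^n) \leq \sqrt{2B}$ by Assumption~\ref{assumption:boundedCostsAndObj}(iii): optimality gives $V\normsq{\bb_t^n - \bv_t^n} + \Phi_t^n\varphi_t^n \leq \Phi_t^n\varphi_t^n(\bb_t^n)$, and discarding $\Phi_t^n\varphi_t^n \geq 0$ yields $V\normsq{\bb_t^n - \bv_t^n} \leq \Phi_t^n\sqrt{2B}$. Taking the minimum of the two bounds gives \eqref{eq:objectiveLyapunovBound2}, and \eqref{eq:objectiveLyapunovBound3} follows by the identical argument with $\ba_t$, $\bu_t$, $\Psi_t$, $\psi_t$ in place of $\bb_t^n$, $\bv_t^n$, $\Phi_t^n$, $\varphi_t^n$, using $\psi_t(\mathbf{0}) = 0$ and $\normsq{\ba_t} \leq D$.

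I expect the main obstacle to be making precise the comparison policy $q_t^{n\star}$ underlying \eqref{eq:objectiveLyapunovBound1}: one must justify that, when the state $\omega(t)$ of P2.1 is i.i.d.\ over $t$, a stationary randomized policy independent of the virtual queue can simultaneously meet the average-cost constraint slot-wise (in expectation) and come arbitrarily close to the offline optimum $\mathrm{OPT}_{q_n}$. This is the standard \emph{best i.i.d.\ policy} lemma underlying drift-plus-penalty analyses, but it must be invoked with care for the finite-horizon formulation used here. By contrast, no such issue arises for \eqref{eq:objectiveLyapunovBound2}--\eqref{eq:objectiveLyapunovBound3}, which are deterministic per-slot inequalities proved purely from the optimality of P3.2 and P3.3 together with the bounds in Assumption~\ref{assumption:boundedCostsAndObj}; they therefore remain valid despite the time- and decision-dependence of the underlying states of P2.2 and P2.3 noted in Section~\ref{subsec:LyapunovChallenges}.
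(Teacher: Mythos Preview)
Your proposal is correct and follows essentially the same approach as the paper: for \eqref{eq:objectiveLyapunovBound1} the paper simply cites Theorem~4.8 of \cite{neely2010stochastic}, which is exactly the drift-plus-penalty telescoping argument you sketch (including the comparison to a stationary queue-independent policy and dropping the nonnegative terminal Lyapunov term), while for \eqref{eq:objectiveLyapunovBound2}--\eqref{eq:objectiveLyapunovBound3} the paper uses the same two comparison points $\bv_t^n=\bb_t^n$ and $\bv_t^n=\mathbf{0}$ against the P3.2 optimum, together with Assumption~\ref{assumption:boundedCostsAndObj}, precisely as you do. Your caveat about the finite-horizon ``best i.i.d.\ policy'' lemma is well taken; the paper does not address it explicitly and defers to \cite{neely2010stochastic}.
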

\begin{proof}
    The inequality \eqref{eq:objectiveLyapunovBound1} can be directly obtained from the proof of Theorem 4.8 in \cite{neely2010stochastic}. 
    To prove \eqref{eq:objectiveLyapunovBound2}, we consider an alternative choice of $\bv_t^n$, denoted by $\bv'^n_t=\bb_t^n$, which makes the objective \eqref{eq:controlProblemLyapunov2} equal to $\Phi_t^n \left(\varphi_t^n(\bb_t^n) - \tilde{\varphi}_n\right)$. Since $\bv_t^n$ is the optimal solution to P3.2, we have
    \begin{align*}
        V\normsq{\bb_t^n - \bv_t^n } + \Phi_t^n \left(\varphi_t^n(\bv_t^n) - \tilde{\varphi}_n\right) &\leq \Phi_t^n \left(\varphi_t^n(\bb_t^n) - \tilde{\varphi}_n\right).
    \end{align*}
    Rearranging gives
    \begin{align*}
        V\normsq{\bb_t^n - \bv_t^n }  &\leq \Phi_t^n \left(\varphi_t^n(\bb_t^n) - \varphi_t^n(\bv_t^n)\right).
    \end{align*}
    Then, we note that $\varphi_t^n(\bb_t^n)\in[0, \sqrt{2B}]$ and $\varphi_t^n(\bv_t^n)\in[0, \sqrt{2B}]$ according to Assumption~\ref{assumption:boundedCostsAndObj} and divide by $V$ on both sides.
    We also have $\normsq{\bb_t^n - \bv_t^n } \leq \normsq{\bb_t^n} \leq D$, because otherwise choosing $\bv_t^n=\mathbf{0}$ gives a smaller value of \eqref{eq:controlProblemLyapunov2}, where we note that $\varphi_t^n(\mathbf{0})=0$ and $\normsq{\bb_t^n} \leq D$ according to Assumption~\ref{assumption:boundedCostsAndObj} and $\Phi_t^n\geq 0$.
    Combining the above gives \eqref{eq:objectiveLyapunovBound2}. The result in \eqref{eq:objectiveLyapunovBound3} can be shown similarly.
\end{proof}

\subsubsection*{Insights}
We first discuss some important insights provided by Theorem~\ref{prop:constraintViolation}. \textit{1)} Theorem~\ref{prop:constraintViolation} shows that the constraints \eqref{eq:controlProblemConstraint1}--\eqref{eq:controlProblemConstraint3} are satisfied as $T\rightarrow \infty$. This is a desirable property of the drift-plus-penalty algorithm in time-independent settings~\cite{neely2010stochastic}. Here, we have shown that although our objectives $\normsq{\bb_t^n - \bv_t^n }$ and $\normsq{\ba_t - \bu_t}$ are correlated with past decisions over time (see Section~\ref{subsec:LyapunovChallenges}), we can still guarantee zero constraint violation when running the algorithm for a sufficiently long time. \textit{2)} By taking the derivative with respect to $\initqueue$, we can further see that the RHS of \eqref{eq:constraintViolation1}--\eqref{eq:constraintViolation3} decreases in $\initqueue$. For a finite $T$, as $\initqueue$ gets large, we will have $\sqrt{\frac{\initqueue^2}{T^2} + \frac{2VD+2B}{T}}- \frac{\initqueue}{T} \approx  \frac{\initqueue}{T} -  \frac{\initqueue}{T} = 0$. Assume that we require the RHS of \eqref{eq:constraintViolation1}--\eqref{eq:constraintViolation3} to be not larger than $\nu$, for some $\nu > 0$. For any finite $T$, we can always find a value of $\initqueue$ so that this requirement is satisfied. Therefore, the introduction of $\initqueue$ in our approach \textit{extends the constraint satisfaction} from infinite $T$, which is the primary focus of~\cite{neely2010stochastic}, \textit{to finite} $T$. \textit{3)}~When $T$ gets large, the dominant term in the RHS of \eqref{eq:constraintViolation1}--\eqref{eq:constraintViolation3} becomes $\mathcal{O}\big(\sqrt{\sfrac{V}{T}}\big)$. This shows that $\initqueue$ controls the constraint satisfaction primarily for small $T$, while the effect of $V$ becomes more prominent for large~$T$.

Next, we discuss Theorem~\ref{prop:objectiveBound}. In \eqref{eq:objectiveLyapunovBound1}, we observe an additive optimality gap of $\mathcal{O}\left(\frac{1}{V} + \frac{\initqueue^2}{VT}\right)$. This result is similar to that in~\cite{neely2010stochastic}, because the objective $\sfrac{1}{q_t^n}$ here only depends on the decision (i.e., $q_t^n$) made in the current iteration $t$. However, because $\normsq{\bb_t^n - \bv_t^n }$ and $\normsq{\ba_t - \bu_t}$ depend on past decisions, the same result does not hold for them. Nevertheless, according to the queue-length dependent bounds in \eqref{eq:objectiveLyapunovBound2} and \eqref{eq:objectiveLyapunovBound3}, the main insight that the optimality error decreases in $V$ still holds. How to obtain a queue-independent bound for the $\normsq{\bb_t^n - \bv_t^n }$ and $\normsq{\ba_t - \bu_t}$ objectives is left for future work.

Combining the above, we have the following key insight on the parameters $\initqueue$ and $V$. Increasing $\initqueue$ or decreasing $V$ improves constraint satisfaction but makes the objective function value less optimal, and vice versa. In addition, $\initqueue$ primarily affects the short-term performance with finite $T$, while $V$ affects the long-term performance. These observations are useful to guide the tuning of $\initqueue$ and $V$, so that a desired trade-off between optimality and constraint satisfaction can be achieved.

\subsection{Specific Costs and Compression Methods}
\label{subsec:specificCost}

Next, we give closed-form solutions to P3.1--P3.3 for some exemplar cost functions that have specific forms with respect to their inputs, and also discuss compression methods to obtain $\bv_t^n$ and $\bu_t$ from $\bb_t^n$ and $\ba_t$, respectively.

\subsubsection{Linear Computation Cost and Solution to P3.1}
Consider a linear computation cost defined as $\lambda_t^n = \alpha_t^n q_t^n$ for some $\alpha_t^n > 0$. The rationale behind this definition is that the expected amount of computation (e.g., number of CPU or GPU cycles) is usually proportional to the probability $q_t^n$. With this definition, we can see that the objective \eqref{eq:controlProblemLyapunov1} of P3.1 is convex in $q_t^n$. By letting the derivative of \eqref{eq:controlProblemLyapunov1} equal to zero and noting that the probability $q_t^n \in [0,1]$, we obtain the optimal solution $\hat{q}_t^n := \arg\min_{q_t^n} \frac{V}{q_t^n} + \Lambda_t^n \big(\alpha_t^n q_t^n - \tilde{\lambda}_n\big)$ as:
\begin{align}
    \textstyle \hat{q}_t^n = \min\left\{1, \sqrt{\frac{V}{\Lambda_t^n \alpha_t^n}}\right\}.
\end{align}

\subsubsection{Transmitting Compressed Update Vectors}
Regardless of the exact definition of the communication cost, the vector $\bv_t^n$ is usually derived from $\bb_t^n$. In a widely applied compression method known as \textit{top-$k$ sparsification}~\cite{Gradient-Sparsification,Sattler2019,albasyoni2020optimal}, the $k$ components of $\bb_t^n$ with the largest magnitudes are included in $\bv_t^n$ and transmitted to the server, while the remaining components that are not transmitted are kept in $\be_{t+1}^n$ for possible transmission in future iterations. When $k$ is small, the vector $\bv_t^n$ is usually represented as a sparse vector by index-value pairs, so that only the $k$ selected components are transmitted. By tuning $k$ (and the corresponding $\bv_t^n$), we can adjust the communication cost. The vector $\bu_t$ can be obtained from $\ba_t$ similarly. There are other parameter compression methods such as quantization~\cite{alistarh2017qsgd,bernstein2018signsgd,shlezinger2020uveqfed,reisizadeh2020fedpaq}. We mainly focus on top-$k$ sparsification in subsequent discussion and experiments, while noting that the same insights also apply to other compression methods.

\subsubsection{Constant-Plus-Linear Communication Cost and Solutions to P3.2--P3.3}
For the communication cost, we consider a definition that includes a constant cost portion $\beta_t^n$ whenever communication occurs. This constant portion captures the additional overhead caused by packet headers and any other necessary control information. In addition, there is a linear portion of the cost that is $\gamma_t^n$ times the amount of information transmitted.
As in top-$k$ sparsification, $\bv_t^n$ and $\bu_t$ include some components of $\bb_t^n$ and $\ba_t^n$, respectively. Then, the number of non-zero components (floating-point numbers) in $\bv_t^n$ or $\bu_t$ represents the amount of communication. 
Based on this description, the cost $\varphi_t^n$ is expressed as
\begin{align}
    \textstyle \varphi_t^n =
    \begin{cases}
    0, & \textrm{if } \norm{\bv_t^n}_0 = 0 \\
    \beta_t^n + \gamma_t^n \norm{\bv_t^n}_0, & \textrm{if } \norm{\bv_t^n}_0 > 0
    \end{cases},
    \label{eq:commSpecificCost}
\end{align}
where $\beta_t^n\geq 0$, $\gamma_t^n > 0$, and $\Vert\cdot\Vert_0$ denotes the $\ell_0$ norm that counts the number of non-zero elements in the vector. 

For a given number of non-zero components in $\bv_t^n$, it is apparent that the objective \eqref{eq:controlProblemLyapunov2} of P3.2 is minimized by choosing $\bv_t^n$ to include the $\norm{\bv_t^n}_0$ largest components (in terms of magnitude) in $\bb_t^n$. This is exactly the top-$k$ sparsification method with $k=\norm{\bv_t^n}_0$. Now, the question is how to determine $k$. Let $(b_t^n)_i$ denote the $i$-th largest component in $\bb_t^n$; we note that the change in the value of the objective \eqref{eq:controlProblemLyapunov2} from $\norm{\bv_t^n}_0=j-1$ to $\norm{\bv_t^n}_0 = j$ is $\Phi_t^n \gamma_t^n - V (b_t^n)_j^2$ 
(recall the definition of $\varphi_t^n$ in \eqref{eq:commSpecificCost}), for $j\geq 2$. The quantity $\Phi_t^n \gamma_t^n - V (b_t^n)_j^2$ does not decrease in $j$ because $\{(b_t^n)_i : \forall i\}$ is sorted in descending order. Therefore, for a specific $j$ such that $\Phi_t^n \gamma_t^n - V (b_t^n)_j^2 \geq 0$, including more than $j$ non-zero components in $\bv_t^n$ for transmission cannot make the objective \eqref{eq:controlProblemLyapunov2} smaller. Due to the discontinuity in $\varphi_t^n$ when switching from $\norm{\bv_t^n}_0 = 0$ to $\norm{\bv_t^n}_0 = 1$, we also need to check the value of the objective \eqref{eq:controlProblemLyapunov2} for the case of $\varphi_t^n = 0$. Let $j^*$ denote the smallest $j$ such that $\Phi_t^n \gamma_t^n - V (b_t^n)_j^2 \geq 0$. We have the following expression for the optimal number of components to transmit:
\begin{align}
    \textstyle \!\!\! k^* \!=\!
    \begin{cases}
        0, & \!\!\!\!\!\!\!\textrm{if } V\!\normsq{\bb_t^n}\! \leq\! V\!\normsq{\bb_t^n \!-\! \bv_t^n|_{j^*} } \!+\! \Phi_t^n \left(\beta_t^n \!+\! \gamma_t^n j^*\right) \\
        j^*, &\!\!\!\! \textrm{otherwise}
    \end{cases}\!\!,\!\!
\end{align}
where $\bv_t^n|_{j^*} \in \mathbb{R}^d$ denotes the vector that includes the $j^*$ largest components in $\bb_t^n$. Finally, the optimal $\hat{\bv}_t^n$ is obtained by setting it to $\bv_t^n|_{k^*}$, which includes the $k^*$ largest components in $\bb_t^n$. The solution to P3.3 has the same form after replacing the corresponding variables.

We conclude that P3.1--P3.3 with these simple but realistic cost definitions can be solved efficiently, while noting that our control algorithm works with other cost definitions too.

\section{\texorpdfstring{Experiments 
\vspace{-0.5em}}{Experiments}
}
\label{sec:experiments}
\begin{figure*}
    \centering
    \includegraphics[width=1\linewidth]{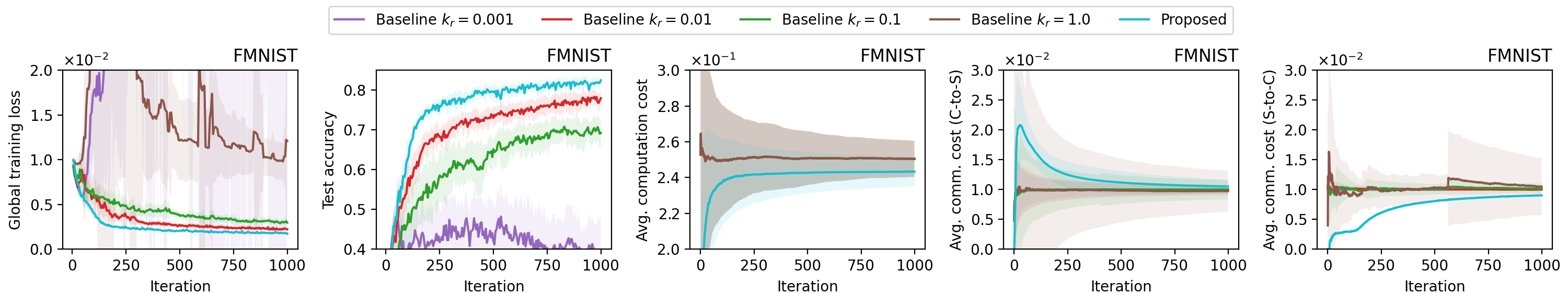}
    \includegraphics[width=1\linewidth]{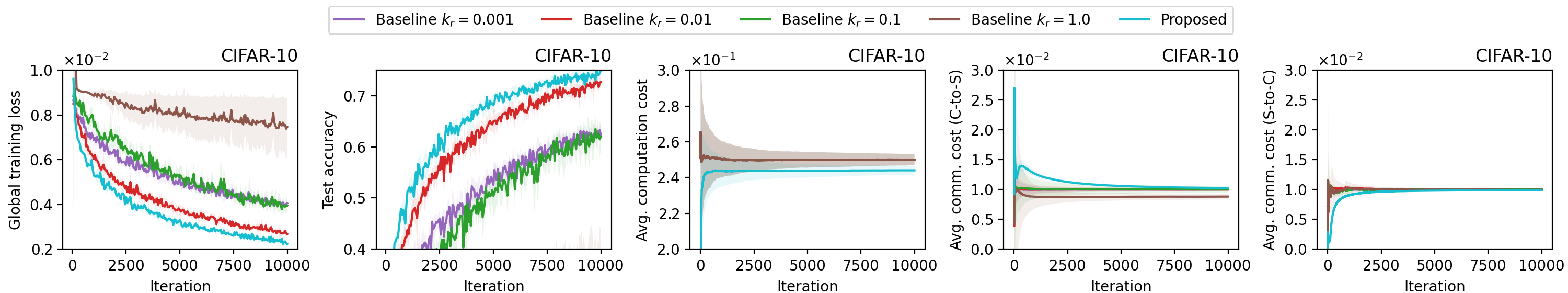}%
    \caption{FMNIST and CIFAR-10 results in comparison to baselines. The test accuracy for the baseline with $k_r := \sfrac{k}{d} = 1.0$ is below $0.4$ so it is not visible in the accuracy plot. The average cost in the plots are computed up to the iteration index on the x-axis (C = client; S = server). The costs of the baseline method with different $k_r$ largely overlap, therefore in many cases only one curve for the baseline is visible.}
    \vspace{-0.5em}
    \label{fig:ResultsBaselines}
\end{figure*}

\subsection{Setup}

\subsubsection{Datasets and Models}
We ran experiments of applying our approach to train models on image datasets. 
We consider two model and dataset combinations: \textit{1)} a two-layer neural network with a hidden layer size of $50$, trained on the Fashion-MNIST (\mbox{FMNIST}) dataset~\cite{FashionMNIST}; \textit{2)}~a convolutional neural network (CNN) with two convolutional + max-pool layers ($3\times 3$ kernel with padding, $32$ filters, followed by $2\times 2$ max-pool) and three fully-connected layers (of sizes $256$, $64$, $10$), trained on the \mbox{CIFAR-10} dataset~\cite{CIFAR10}. 
We use ReLU activation functions (except for the last layer) and Kaiming initialization~\cite{he2015delving}.
We simulate an FL system with $100$ clients. Each dataset is partitioned in a non-i.i.d. manner so that each client only has data of one class (out of all the $10$ classes), to simulate a challenging setup with high statistical heterogeneity.

\subsubsection{Costs}
The costs are defined according to the discussion in Section~\ref{subsec:specificCost}. For the computation cost, we assume that the linear coefficient $\alpha_t^n$ follows a uniform random distribution between $0$ and $1$. For the communication cost from clients to the server, we fix the constant portion to $\beta_t^n = 0.05$ to capture the overhead for headers, communication establishment, etc. The linear coefficient $\gamma_t^n$ depends on the amount of channel usage, which is related to the channel capacity. Note that the Gaussian channel capacity is $C(\Gamma_\mathrm{SNR}) := \frac{1}{2}\log_2(1+\Gamma_\mathrm{SNR})$ per channel use, where $\Gamma_\mathrm{SNR}$ denotes the signal-to-noise ratio (SNR). We define the linear portion of the communication cost $\gamma_t^n k$ as the number of channel use for transmitting $k$ components with $\Gamma_\mathrm{SNR} = \zeta$, 
normalized by the number of channel use for transmitting the entire model with $d \geq k$ components and a fixed $\Gamma_\mathrm{SNR} = 1$. This gives $\gamma_t^n := \frac{1}{k} \cdot \frac{k}{C(\zeta)}\Big/ \frac{d}{C(1)} = \frac{1}{2dC(\zeta)}$. Here, we choose 
$\zeta\sim \chi^2_2$ to simulate a Rayleigh fading channel, where we note that the square of a Rayleigh-distributed channel gain follows chi-squared distribution with a degree of freedom of $2$ (denoted by $\chi^2_2$). This definition of $\gamma_t^n$ captures the random channel condition and makes the communication cost $\varphi_t^n$ defined in \eqref{eq:commSpecificCost} to scale only with the percentage of parameter components transmitted. The communication cost from the server to clients is defined in the same way, but it is scaled down by a factor of $5$, because the downlink channel usually has higher bandwidth than the uplink channel.
In general, the randomness in these cost definitions simulate random resource costs that can be time-varying and heterogeneous across clients and the server.

\subsubsection{Baseline}
In addition to our proposed FlexFL algorithm with online control, we also consider a baseline algorithm that either transmits $k$ components or transmits nothing in each iteration $t$. When there are less than $k$ non-zero elements in $\bv_t^n$ or $\bu_t$, the baseline only transmits those non-zero elements, which can be less than $k$. To conform to the resource constraints \eqref{eq:controlProblemConstraint1}--\eqref{eq:controlProblemConstraint3}, the baseline makes a \textit{randomized} decision of whether to transmit or not in each iteration $t$, so that the expected cost in each iteration is equal to the targeted average cost (either $\tilde{\lambda}_n$, $\tilde{\varphi}_n$, or $\tilde{\psi}$). The probability $q_t^n$ is determined using a similar randomized approach by the baseline. Note, however, that when $\bv_t^n$ or $\bu_t$ has all zero entries, the expected cost of the baseline is also zero, which is smaller than constraint upper bounds ($\tilde{\varphi}_n$ or $\tilde{\psi}$). Thus, it is possible that the actual average communication cost of the baseline is slightly lower than the target (see Fig.~\ref{fig:ResultsBaselines}).
This baseline is a representative method that includes core ideas of a range of existing techniques. For example, it adapts the communication frequency based on cost constraints~\cite{hsieh2017gaia,WangJSAC2019,MLSYS2019Jianyu}, supports partial client participation (computation)~\cite{nishio2018client,CMFL,shi2020joint,perazzone2022communication,wu2022node,wang2020optimizing,luo2021cost,luo2022tackling}, and works with different sparsity values $k$~\cite{li2020ggs,han2020adaptive,li2021talk,abdelmoniem2021dc2,xu2021grace,cui2022optimal}. We use this baseline instead of specific existing methods, because we are not aware of a method that captures the same set of cost constraints as in our work, and a comparison is only meaningful if the time-averaged cost constraints are aligned.

\subsubsection{Other Parameters}
We set the time-averaged constraints to $\tilde{\lambda}_n = 0.25$, $\tilde{\varphi}_n = \tilde{\psi} = 0.01$, to simulate an environment with limited communication resources. We also set the learning rate to $\eta = 0.1$ and the default parameters of our control algorithm $V=0.02$ and $\initqueue = 1.0$.
Each setting was run with $20$ different random seeds for FMNIST and $5$ different random seeds for CIFAR-10.
In each plot, the curve shows the mean and the shaded area shows the standard deviation.

\subsection{Results}

\subsubsection{Comparing to Baseline}
We define $k_r := \sfrac{k}{d}$ as the ratio of the transmitted parameter components to the total number of components. A few observations from Fig.~\ref{fig:ResultsBaselines} are as follows. 

First, our proposed method outperforms the baseline with different $k_r$ values in both loss and accuracy values for both datasets (and models). This shows the advantage of our method that optimizes the convergence upper bound over time, which can choose different $k_r$ depending on instantaneous cost and virtual queue lengths, and it is more flexible and performs better than fixing $k_r$ as in the baseline. 

Second, the average costs of our proposed method get close to or are below their target values when the number of iterations (i.e., $T$) is large enough. This aligns with our theory in Section~\ref{subsec:controlAlgAnalysis} that has shown the constraint violation is bounded and approaches zero when $T$ gets large. It is also interesting to see that, in our proposed approach, the computation cost and server-to-client communication cost start from the lower end below the target value, while the client-to-server communication cost becomes larger than the target value in initial iterations but reduces later. This shows that the client-to-server communication is the main bottleneck with the current choice of cost and constraint parameters.

\subsubsection{Comparing Different Configurations of $V$ and $\initqueue$}
The trade-off between constraint satisfaction and optimality can be tuned by  $V$ and $\initqueue$, as discussed in Section~\ref{subsec:controlAlgAnalysis}. 
We verify this using experiments and their results are shown in Figs.~\ref{fig:ResultsDifferentV}--\ref{fig:ResultsDifferentW}. Due to space limitation, we only show the accuracy, computation cost, and client-to-server communication cost for CIFAR-10. The main observations remain the same for the other metrics and dataset. We can clearly see that the choice of $\initqueue$ mainly affects the costs in initial iterations, while the choice of $V$ has a more long-term effect. In addition, a smaller $V$ or a larger $\initqueue$ reduces the cost and gives a slightly lower accuracy. This aligns with our theoretical results in Section~\ref{subsec:controlAlgAnalysis}. 

\begin{figure}
    \centering
    \includegraphics[width=1\columnwidth]{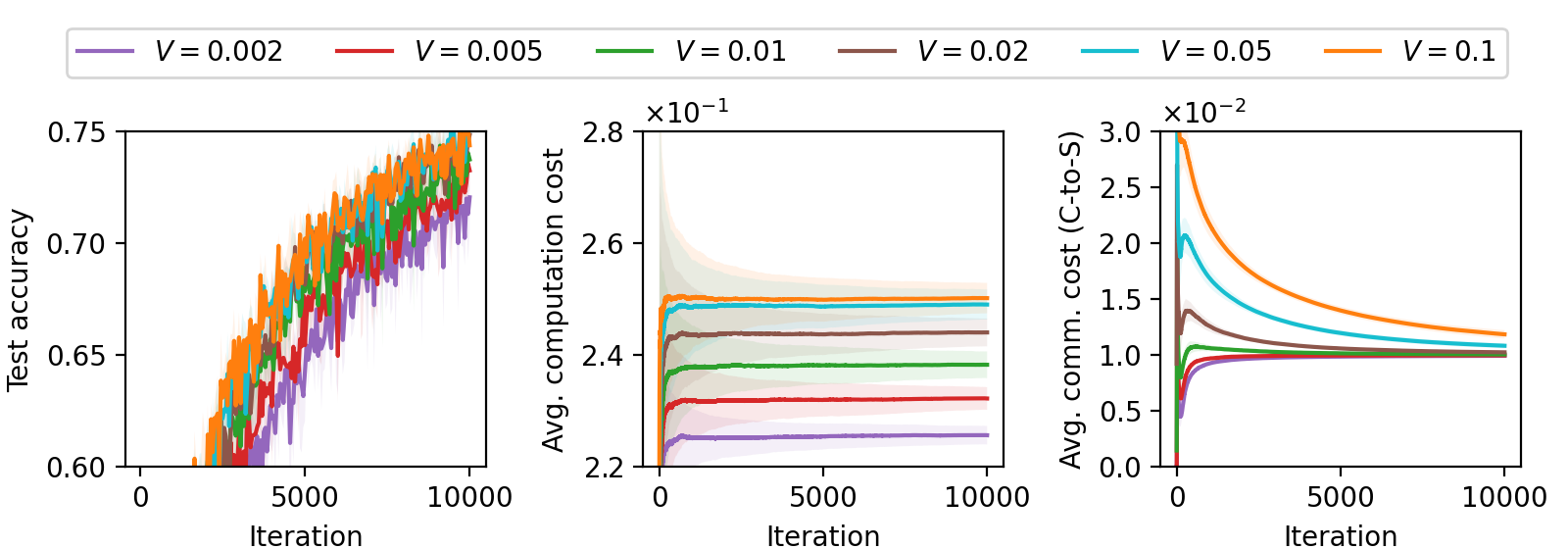}
    \caption{Proposed method with $W=1.0$ and different $V$ (CIFAR-10).}
    \label{fig:ResultsDifferentV}
\end{figure}
\begin{figure}
    \centering
    \includegraphics[width=1\columnwidth]{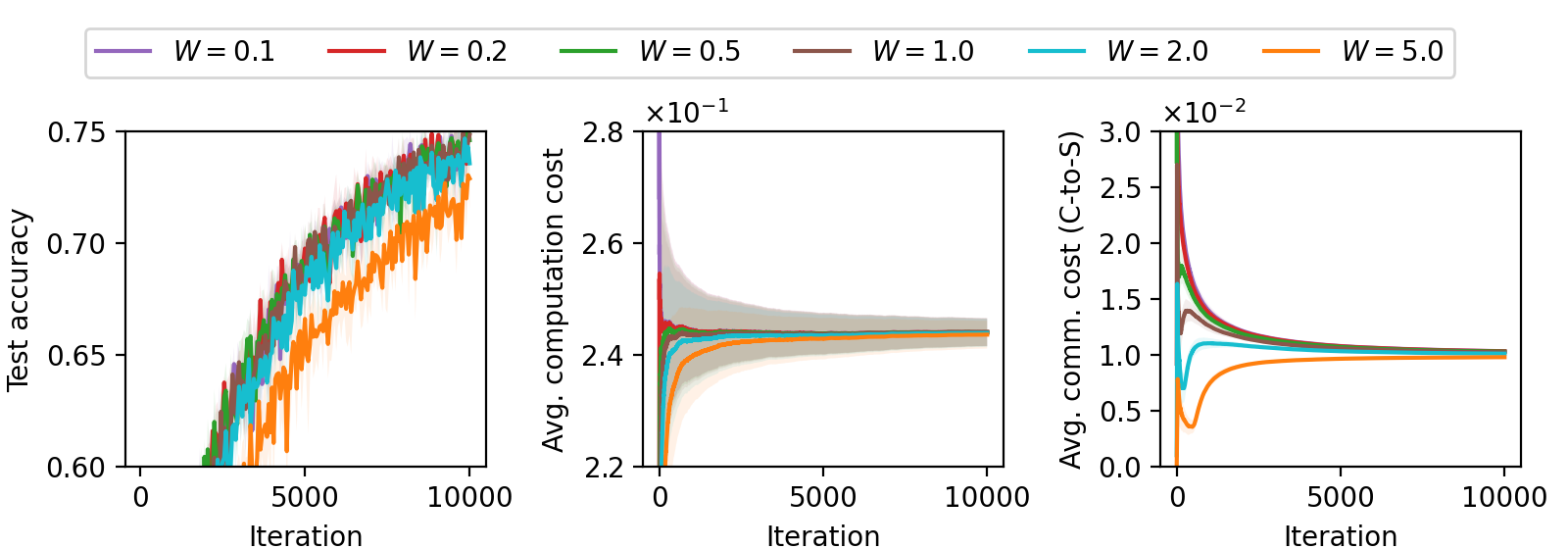}
    \caption{Proposed method with $V=0.02$ and different $W$ (CIFAR-10).}
    \label{fig:ResultsDifferentW}
\end{figure}

\section{Conclusion}
In this paper, we have proposed FlexFL and its online control algorithm. FlexFL has a set of flexible control knobs to adjust the amount of computation and communication. It includes no communication as a special case and randomly decides whether to compute in each iteration according to an adjustable probability, therefore supporting multiple local computations and partial participation. By analyzing its convergence, we have provided a theoretical foundation on how the amount of computation and communication affect the model training performance. Accordingly, we have proposed a control algorithm to automatically determine the configuration parameters of FlexFL subject to time-averaged cost constraints. The control algorithm includes useful parameters $V$ and $\initqueue$, which can be tuned to adjust the trade-off between constraint satisfaction and optimality. Our experiments show that coarsely chosen $V$ and $\initqueue$ can provide good results on two different datasets without the need of detailed tuning. If desired, $V$ and $\initqueue$ can be further tuned for fine-grained control.

There are direct extensions possible to our algorithm. For example, we may only optimize a subset of the configuration parameters in FlexFL, we can also choose different $V$ and $\initqueue$ for different types of costs and different entities. Moreover, our work provides a comprehensive methodology of optimizing multiple configuration options in FL using stochastic optimization, which can inspire future works.



\section*{Appendix: Proof of Theorem~\ref{theorem:mainConvergence}}

We first note some preliminary inequalities that will be used throughout the proof.
From Jensen's inequality, for any $\{\bz_m \in \mathbb{R}^d : m\in\{1,2,\ldots,M\}\}$, we have $\normsqsizable{\big}{ \frac{1}{M}\sum_{m=1}^M \bz_m } \leq \textstyle \frac{1}{M}\sum_{m=1}^M \normsq{\bz_m }$,
which directly gives $\normsqsizable{\big}{ \sum_{m=1}^M \bz_m } \leq \textstyle M \sum_{m=1}^M \normsq{ \bz_m }$.
Peter-Paul inequality (also known as the generalized version of Young's inequality) gives $\langle \bz_1, \bz_2 \rangle \leq \frac{\rho \normsq{ \bz_1 }}{2} + \frac{\normsq{ \bz_2 }}{2\rho}$,
for any $\rho > 0$ and any $\bz_1, \bz_2  \in \mathbb{R}^d$.
In addition, we use the notations in Algorithm~\ref{alg:mainAlg}.
We also let $\Expectsubbracket{t}{\cdot} := \Expectcond{\cdot}{\bx_t, \br_t, \{\be_t^n\}}$.
We define
\begin{align}
    \textstyle
    \tilde{\bx}_t := \bx_t + \br_t + \frac{1}{N}\sum_{n=1}^N \be_t^n.
\end{align}
From Algorithm~\ref{alg:mainAlg}, we know that
\begin{align*}
    \tilde{\bx}_{t+1} &=\textstyle \bx_{t+1} + \br_{t+1} + \frac{1}{N}\sum_{n=1}^N \be_{t+1}^n \\
    &= \textstyle(\bx_t + \bu_t) + (\ba_t - \bu_t) +\frac{1}{N}\sum_{n=1}^N(\bb_t^n - \bv_t^n) \\
    &= \textstyle\bx_t + \br_t +\frac{1}{N}\sum_{n=1}^N\Big(\be_t^n - \frac{\eta \Identity_t^n}{q_t^n}\cdot \bg_n(\bx_t)\Big) \\
    &= \textstyle\tilde{\bx}_t - \frac{\eta}{N}\sum_{n=1}^N \frac{\Identity_t^n}{q_t^n}\cdot \bg_n(\bx_t).
\end{align*}

From smoothness, we have
\begin{align}
    &\textstyle\Expectsubbracket{t}{f(\tilde{\bx}_{t+1})} \leq f(\tilde{\bx}_t) - \innerprod{\!\nabla f(\tilde{\bx}_t), \Expectsubbracket{t\!}{\frac{\eta}{N}\!\sum_{n=1}^N \frac{\Identity_t^n}{q_t^n}\cdot \bg_n(\bx_t)}} \nonumber\\
    &\textstyle\quad\quad\quad\quad\quad\quad + \frac{L}{2}\Expectsubbracketsizable{t}{\Big}{\normsqsizable{\big}{\frac{\eta}{N}\sum_{n=1}^N \frac{\Identity_t^n}{q_t^n}\cdot \bg_n(\bx_t)}}\nonumber \\
    &\textstyle\leq \! f(\tilde{\bx}_t) \!-\! \eta\!\innerprod{\nabla f(\tilde{\bx}_t), \nabla f(\bx_t)} \!+\! \frac{\eta^2\! L}{2N}\!\sum_{n=1}^N\!\Expectsubbracket{t\!\!}{\!\normsqsizable{\big}{ \frac{\Identity_t^n}{q_t^n}\!\cdot\! \bg_n(\bx_t)}} \!\!.
    \label{eq:proofMainTheorem1}
\end{align}
We consider the two terms in \eqref{eq:proofMainTheorem1} separately. We first have
\begin{align}
    &\textstyle- \eta\innerprod{\nabla f(\tilde{\bx}_t), \nabla f(\bx_t)} \nonumber\\
    &\textstyle= - \eta\innerprod{\nabla f(\tilde{\bx}_t) - \nabla f(\bx_t), \nabla f(\bx_t)} - \eta\innerprod{ \nabla f(\bx_t), \nabla f(\bx_t)} \nonumber\\
    &\textstyle\leq \frac{\eta L^2}{2}\normsq{\tilde{\bx}_t - \bx_t} + \frac{\eta}{2}\normsq{\nabla f(\bx_t)}
    - \eta\innerprod{ \nabla f(\bx_t), \nabla f(\bx_t)} \nonumber\\
    &\textstyle= \frac{\eta L^2}{2}\normsq{\br_t + \frac{1}{N}\sum_{n=1}^N \be_t^n} - \frac{\eta}{2}\normsq{\nabla f(\bx_t)} \nonumber\\
    &\textstyle\leq \eta L^2\normsq{\br_t} + \frac{\eta L^2}{N}\sum_{n=1}^N \normsq{\be_t^n} - \frac{\eta}{2}\normsq{\nabla f(\bx_t)}.
    \label{eq:proofMainTheorem2}
\end{align}
By noting that $(\Identity_t^n)^2 = \Identity_t^n$ and $\Expectsubbracket{t}{\Identity_t^n}=q_t^n$, we also have
\begin{align}
    &\textstyle\sum_{n=1}^N\Expectsubbracketsizable{t\!}{\Big}{\normsqsizable{\big}{ \frac{\Identity_t^n}{q_t^n}\cdot \bg_n(\bx_t)}}  =\sum_{n=1}^N\Expectsubbracket{t\!\!}{ \frac{\Identity_t^n}{(q_t^n)^2}}\!\cdot\!\Expectsubbracket{t\!\!}{\normsq{ \bg_n(\bx_t)}} \nonumber\\
    &\textstyle=\sum_{n=1}^N \frac{1}{q_t^n}\Expectsubbracketsizable{t}{\big}{\normsqsizable{}{ \bg_n(\bx_t)}} 
    \leq \sum_{n=1}^N \frac{1}{q_t^n}\big(\normsqsizable{}{ \nabla F_n(\bx_t)} + \sigma^2\big)  \nonumber\\
    &\textstyle \leq \sum_{n=1}^N \frac{1}{q_t^n}\big( 2\normsqsizable{}{ \nabla f(\bx_t)} + 2\epsilon^2 + \sigma^2 \big),
    \label{eq:proofMainTheorem3}
\end{align}
where the last two inequalities are due to Assumption~\ref{assumption:convergence} and also the variance relation $\Expectsubbracketsizable{t}{\big}{\normsq{\bz}} = \normsqsizable{}{\Expectbracket{\bz}} + \Expectsubbracketsizable{t}{\big}{\normsqsizable{}{\bz - \Expectsubbracketsizable{t}{}{\bz}}}$ for any random variable $\bz$.

Let $Q_t := \frac{1}{N}\sum_{n=1}^N \frac{1}{q_t^n}$. Note that we assume $Q_t \leq p$ and $\eta \leq \frac{1}{4Lp}$. Hence, $\eta \leq \frac{1}{4Lp} \leq \frac{1}{4LQ_t}$ and $- \frac{\eta}{2} + \eta^2 L Q_t \leq - \frac{\eta}{4}$. Plugging \eqref{eq:proofMainTheorem2} and \eqref{eq:proofMainTheorem3} back into \eqref{eq:proofMainTheorem1}, we obtain
\begin{align*}
    &\textstyle\Expectsubbracket{t}{f(\tilde{\bx}_{t+1})} \leq f(\tilde{\bx}_t) + \eta L^2\normsq{\br_t} + \frac{\eta L^2}{N}\sum_{n=1}^N \normsq{\be_t^n}\\
    &\textstyle\quad - \frac{\eta}{2}\normsq{\nabla f(\bx_t)} + \eta^2 L Q_t\normsq{ \nabla f(\bx_t)} + \eta^2 L Q_t (\epsilon^2 + \sigma^2)\\
    &\textstyle\leq f(\tilde{\bx}_t) + \eta L^2\normsq{\br_t} + \frac{\eta L^2}{N}\sum_{n=1}^N \normsq{\be_t^n}\\
    &\textstyle\quad - \frac{\eta}{4}\normsq{\nabla f(\bx_t)} + \eta^2 L Q_t (\epsilon^2 + \sigma^2).
\end{align*}

Taking total expectation and rearranging, we obtain
\begin{align*}
    &\textstyle\Expectbracketsizable{\big}{\normsq{\nabla f(\bx_t)}} \leq \frac{4\left(\Expectbracket{f(\tilde{\bx}_t)} - \Expectbracket{f(\tilde{\bx}_{t+1})}\right)}{\eta} \\
    &\textstyle\quad\!\! +\! 4 L^2\Expectsubbracketsizable{\!}{\big}{\normsqsizable{}{\br_t}} \!+\! \frac{4 L^2}{N}\sum_{n=1}^N \Expectsubbracketsizable{\!}{\big}{\normsqsizable{}{\be_t^n}} \!+\! 4 \eta L (\epsilon^2 \!+\! \sigma^2) \Expectsubbracket{\!\!}{Q_t}.
\end{align*}
Averaging over all $t$, we obtain the final result.
\qed

\clearpage

\bibliographystyle{IEEEtran}
\bibliography{references}

\end{document}